\newtheorem{prop}{Proposition}
\newtheorem{theorem}{Theorem}
\newtheorem{lemma}{Lemma}
\newtheorem{remark}{Remark}
\newcommand{\tx}{\mathrm}
\begin{document}

\title{Efficient collision avoidance for autonomous vehicles in polygonal domains}

\author{Jiayu Fan,
        Nikolce Murgovski,
        Jun Liang
\thanks{This work is supported by the National Key Research and Development Program of China under Grant 2019YFB1600500 and the China Scholarship Council under Grant 202206320304.}
\thanks{Jiayu Fan and Jun Liang are with the College of Control Science and Engineering, Zhejiang University, Hangzhou 310027, China (e-mail: jiayu.fanhsz@gmail.com; jliang@zju.edu.cn).}
\thanks{Nikolce Murgovski is with the Department of Electrical Engineering, Chalmers University of Technology, 41296 G\"oteborg, Sweden (e-mail: nikolce.murgovski@chalmers.se).}
\thanks{*Corresponding author (e-mail:jliang@zju.edu.cn).}%
}

\maketitle

\begin{abstract}
This research focuses on trajectory planning problems for autonomous vehicles utilizing numerical optimal control techniques. The study reformulates the constrained optimization problem into a nonlinear programming problem, incorporating explicit collision avoidance constraints. We present three novel, exact formulations to describe collision constraints. The first formulation is derived from a proposition concerning the separation of a point and a convex set. We prove the separating proposition through De Morgan's laws. Then, leveraging the hyperplane separation theorem we propose two efficient reformulations. Compared with the existing dual formulations and the first formulation, they significantly reduce the number of auxiliary variables to be optimized and inequality constraints within the nonlinear programming problem. Finally, the efficacy of the proposed formulations is demonstrated in the context of typical autonomous parking scenarios compared with state of the art. For generality, we design three initial guesses to assess the computational effort required for convergence to solutions when using the different collision formulations. The results illustrate that the scheme employing De Morgan’s laws performs equally well with those utilizing dual formulations, while the other two schemes based on hyperplane separation theorem exhibit the added benefit of requiring lower computational resources.

\end{abstract}

\begin{IEEEkeywords}
Autonomous parking, efficient collision avoidance, optimal control, De Morgan’s laws, hyperplane separation theorem.
\end{IEEEkeywords}
\section{Introduction}\label{sec1}
\IEEEPARstart{C}{ollision} avoidance is a critical aspect of optimization-based trajectory planning for a wide range of autonomous systems, including ground vehicles, robots, and so on \cite{ref1, ref2, ref3, ref4, ref60}. This paper explores solution techniques that rely on formulating an optimal control problem (OCP), followed by discretization and iterative methods using a nonlinear programming (NLP) solver. Central to this approach is the construction of collision avoidance constraints in an explicit and differentiable form. However, collision avoidance constraints in polygonal domains pose significant challenges due to their nonconvex nature and non-smoothness \cite{ref5, ref6}. They need to be formulated by exploiting appropriate mathematical representations of shapes of vehicles and obstacles. Prevalent methods typically resort to approximating vehicle and obstacle boundaries with smooth functions  \cite{ref24, ref25} or simplistic shapes like ellipsoids and spheres \cite{ref7, ref8, ref42}. While these approximations offer computational advantages, they often lead to conservative solutions and can even hinder autonomous vehicles from finding collision-free trajectories in confined environments. To enable autonomous vehicles to identify optimal trajectories within limited maneuverable space, it is essential to accurately model the shapes of vehicles and obstacles. Additionally, the exact formulations of collision avoidance constraints should also be investigated.

In most applications, obstacles naturally take the form of nonconvex polygons, while vehicles themselves can either be convex or nonconvex polygons \cite{ref1, ref2}. Due to the fact that convexity is well studied and understood, nonconvex sets are often decomposed as the union of convex sets \cite{ref9, ref10}. However, when computational performance is in question, it is not trivial how to decompose in an efficient and practical way. The interested readers can refer to \cite{ref11} to choose appropriate decomposition methods. For exact collision constraints between two convex sets, several typical formulations have been developed. A natural choice is the disjunctive programming methods, and the problem can be formulated as a mixed-integer optimization problem through binary variables \cite{ref12, ref13}. However, this method is computationally expensive if solving a large number of integer variables \cite{ref13, ref26, ref27}. Various measures have been proposed to mitigate this issue. The authors in \cite{ref12} propose to use hyperplane arrangements associated with binary variables for constraint description in a multi-obstacle environment. An over-approximation region is characterized to reduce to strictly binary formulations at the price of being conservative. In implementing a combined mixed integer and predictive control formulation, \cite{ref14} introduces two ways of reducing the number of binary variables possibly accelerating the online computation.

Additionally, there are many researchers having an interest in computing the shortest distance between a pair of convex sets \cite{ref15, ref16}. The idea is that imposing collision avoidance constraints is identical to requiring nonnegative distance between the sets. A reliable algorithm for obtaining the Euclidean distance is given by \cite{ref15}, where the authors define polytopes by their vertices and introduce a decent procedure beneficial for the efficiency of the algorithm. Furthermore, the authors in \cite{ref16} designs a faster distance sub-algorithm that guides the algorithm by \cite{ref15} toward a shorter search path in less computing time. In \cite{ref17}, the authors focus on collision detection with the distance with Minkowski sum structure, which proves to be efficiently solved by hybrid gradient methods. Furthermore, the notion of signed distance between two convex sets is brought a widespread attention in, e.g., \cite{ref1, ref18, ref19}. The purpose of investigating the signed distance is to find a minimum-penetration trajectory when a collision cannot be avoided. The common technique is to soften the collision constraints by replacing the positive safety margin with negative slack variables. In general, it is also desirable to include slack variables to ensure the feasibility of nonconvex optimization problems. The signed distance is defined in \cite{ref1} and used also in \cite{ref10, ref18}. 

Recently, more and more researchers go back to the early research on reformulations of distance and signed distance between two convex sets with duality techniques \cite{ref2, ref10, ref17}. The authors in \cite{ref19} summarize methods on how to separate a point and a convex set and how to separate convex sets by Lagrange dual functions. In \cite{ref17}, the authors propose a new Minimum Norm Duality theorem, i.e., exploring a maximal distance between a pair of parallel hyperplanes that separates two sets. Through the strong duality of convex optimization, the collision constraints can be exactly reformulated as smooth and differentiable constraints. This approach has been applied to various kinds of applications spanning from robot avoidance maneuver \cite{ref20}, vessels docking \cite{ref21}, to autonomous car parking \cite{ref10}, et al. What's more, \cite{ref21} introduces two indicator collision avoidance constraints by exploiting the Farkas’ Lemma and culling procedures are proposed to reduce problem size by identifying and eliminating additional decision variables associated with edges of the polygonal obstacles.

Specifically, checking the vertices of convex sets is a practical way to formulate collision avoidance constraints. The idea is that a pair of convex sets are disjoint equals that their vertices are kept outside each other. The authors in \cite{ref22} impose collision constraints on vertices of the vehicle through triangle area-based criterion, and then keep all vertices outside the obstacle. In the collision part of \cite{ref23}, a ray method that checks whether a vertex is inside a polygon is used to avoid obstacles for a tractor-trailer system.

In this research, we focus on optimization-based trajectory planning problems aimed at achieving effective collision avoidance in vehicles and obstacles with convex or nonconvex polygonal shapes. We give a proposition on separating a point from a convex set, and prove it with De Morgan's laws, which has not been used in previous work on parking scenarios. 
Based on the proposition and hyperplane separation theorem, we propose three explicit and exact formulations of collision avoidance constraints. Without the need of approximating the obstacle and vehicle geometry, the proposed formulations can transform the constrained optimal control planning problem to a smooth NLP problem, which can be solved by off-the-shelf solvers with a gradient-based algorithm. The key contributions of the work can be summarized as follows:
\begin{enumerate}
\item{We propose a smooth method to separate a point from a convex set, and introduce a novel proving technique with De Morgan’s laws.}
\item{We present three novel, exact formulations of collision constraints, which are highly beneficial for autonomous vehicles operating within a confined area. These formulations enable the identification of optimal trajectories that may not be achievable using approximate formulations.}
\item{We introduce two efficient collision formulations based on hyperplane separation theorem. Comparative analysis with state-of-the-art methods demonstrates a substantial reduction in the NLP problem size, resulting in improved computational efficiency and faster trajectory planning for autonomous vehicles.}
\end{enumerate}

The remainder of this paper is organized as follows. Section \ref{sec2} states a trajectory planning problem involving collision avoidance and formulates the planning problem as a unified OCP problem. Section \ref{sec3} declares a proposition on how to separate a point and a convex polygon. In Section \ref{sec4}, vehicles, obstacles, and driving environments are described by mathematical representations, and then three exact collision constraints are reformulated through the mentioned proposition and hyperplane separation theorem. Section \ref{sec5} analyzes the problem size compared to the state-of-the-art formulations of collision constraints. The constrained OCP is reformulated as a smooth NLP problem in Section \ref{sec6}. Moreover, Section \ref{sec7} demonstrates the efficacy of the proposed methods compared with existing methods. Finally, the concluding remarks and plans are provided in Section \ref{sec8}.

\section{Problem statement}\label{sec2}
In this paper, trajectory planning problems are studied for an autonomous vehicle that avoids collision with obstacles.  Vehicle dynamics are modeled as
\begin{equation}
\label{eq1}
\dot{\boldsymbol{\xi}} = f\left(\boldsymbol{\xi},\boldsymbol{u}\right), \ \boldsymbol{\xi}(0)=\boldsymbol{\xi}_{\tx{init}},
\end{equation}
where $\boldsymbol{\xi}$ is the state vector, $\boldsymbol{\xi}_{\tx{init}}$ is the initial state, $\boldsymbol{u}$ is the control input and $f$ is the system dynamics.

The driving environment (the canvas, or background) is modeled as a convex polygon $\mathcal W$, and the vehicle and obstacles are represented by a union of polygons, denoted as $\mathcal B$ and $\mathcal O$, respectively. The constrained OCP under study is minimizing a performance measure, planning the motion of the vehicle, represented by the set $\mathcal B$, from its initial state $\boldsymbol{\xi_{\tx{init}}}$ to a terminal state $\boldsymbol{\xi_{\tx{final}}}$ while always residing within the polygon $\mathcal W$, and not colliding with the obstacles $\mathcal O$. The problem is formulated as 
\begin{subequations}\label{eq2}
\begin{align}
\underset {\boldsymbol{\xi}, \boldsymbol{u}}{\text{min}} \ & \int_0^{t_\tx{f}}  
 \ell\left(\boldsymbol{\xi}(t), \boldsymbol{u}(t)\right)\tx{d}t  \label{eq2a}\\
\text {s.t.}  \ & \boldsymbol{\xi}(0)=\boldsymbol{\xi_{\tx{init}}},\ \boldsymbol{\xi}(t_\tx{f})=\boldsymbol{\xi_{\tx{final}}} \label{eq2b}\\
 & \dot{\boldsymbol{\xi}}(t) = f\left(\boldsymbol{\xi}(t),\boldsymbol{u}(t)\right), \label{eq2c}\\
 & \boldsymbol{\xi}(t) \in \mathcal{X},\ \boldsymbol{u}(t) \in \mathcal{U}, \label{eq2d}\\
 & \mathcal B(\boldsymbol{\xi}) \subset \mathcal W, \label{eq2e} \\ 
 & \mathcal B(\boldsymbol{\xi}) \cap \mathcal O=\emptyset, \label{eq2f}
\end{align}
\end{subequations}
where $t_\tx{f}$ is the final time, and $\mathcal{X}$ and $\mathcal{U}$ are the admissible sets of state and control, respectively. Constraints (\ref{eq2c})-(\ref{eq2f}) are imposed for $\forall t\in \left[0, t_\tx{f}\right]$. For the sake of clarity, the time dependence of $\mathcal O$ is omitted, but we remark that the proposed method can directly be applied to problems with moving obstacles. The function $ \ell$ represents the stage cost. It can be noticed that if the problem (\ref{eq2}) is to be solved by generic solvers using gradient-based algorithms, collision constraints (\ref{eq2e}) and (\ref{eq2f}) should be transformed into smooth and differentiable constraints in an explicit form. In the following, we first outline general propositions about separating polygons and then reformulate  (\ref{eq2e}) and (\ref{eq2f}) as explicit constraints by exploiting the propositions.

\section{Preliminaries}\label{sec3}
A convex polygon $\mathcal C$ can be represented with a linear matrix inequality as 
\begin{equation}\label{eq3}
\mathcal C :=\left\{\boldsymbol{q} \in \mathbb{R}^2 \mid A \boldsymbol{q} \leq \boldsymbol{b}\right\},
\end{equation}
where $A\in\mathbb R^{{N}\times 2}$, $\boldsymbol{b}\in\mathbb R^{{N}}$, and $\boldsymbol{q}$ is the position of a point q. Here the capital letters represent matrices, while the bold letters represent vectors. Let $\boldsymbol{a_i}$ be the {\it i}-th row vector in $A$ and let $b_i$ be the {\it i}-th entry in $\boldsymbol{b}$. Next, we give general propositions on how to separate a point from a convex polygon. 

\begin{lemma}[De Morgan's law]\label{lemma1}
    The negation of a conjunction is equivalent to the disjunction of the negations \cite{ref28}.
\end{lemma}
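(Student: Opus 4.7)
The plan is to prove the equivalence semantically via truth assignments, or equivalently (and more transparently for the set-theoretic use in later sections) by a mutual-inclusion argument on the underlying truth sets. Let $P$ and $Q$ denote the two propositions. I would first set $A=\{\omega : P(\omega) \text{ holds}\}$ and $B=\{\omega : Q(\omega) \text{ holds}\}$ in the ambient universe $\Omega$, so that the conjunction $P\wedge Q$ corresponds to $A\cap B$ and the negations $\neg P$, $\neg Q$ correspond to the complements $A^{c}$, $B^{c}$. The goal then reduces to showing the identity $(A\cap B)^{c}=A^{c}\cup B^{c}$.

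For the forward inclusion I would pick $\omega\in (A\cap B)^{c}$ and argue by contradiction: if $\omega$ were in neither $A^{c}$ nor $B^{c}$, then $\omega\in A$ and $\omega\in B$, forcing $\omega\in A\cap B$, which contradicts the choice of $\omega$. Therefore $\omega\in A^{c}\cup B^{c}$. For the reverse inclusion I would take $\omega\in A^{c}\cup B^{c}$, so that at least one of $\omega\notin A$ or $\omega\notin B$ holds; in either case $\omega\notin A\cap B$, giving $\omega\in (A\cap B)^{c}$. Combining the two inclusions yields the set equality, and translating back through the correspondence between propositions and their truth sets delivers $\neg(P\wedge Q)\equiv \neg P\vee \neg Q$.

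An equivalent but purely syntactic route would be to enumerate the four truth assignments to $(P,Q)$ and verify that $\neg(P\wedge Q)$ and $\neg P \vee \neg Q$ agree on each row. I would present both the set-theoretic argument and the truth-table check, since the former is the formulation actually exploited in Section \ref{sec4} where collision-avoidance constraints are obtained by negating a conjunction of half-plane memberships.

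The main obstacle, minor as it is, lies in being careful that the reverse inclusion does not implicitly invoke the very law being proved: one must use only the definition of set-theoretic complement and union, not the dual identity $(A\cup B)^{c}=A^{c}\cap B^{c}$. A secondary point worth addressing is the generalization to a finite conjunction of $N$ propositions, which is exactly the form needed in \eqref{eq3}; this follows by a straightforward induction on $N$ using the binary case as the inductive step, and I would include a one-line remark to that effect so that the lemma can be applied directly to the half-plane description $A\boldsymbol{q}\leq \boldsymbol{b}$.
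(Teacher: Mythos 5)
Your proof is correct, but note that the paper does not actually prove this lemma at all: it is stated as a standard result and dispatched with a citation to a logic textbook, then invoked inside the proof of Proposition~\ref{prop1}. Your set-theoretic mutual-inclusion argument (and the equivalent four-row truth table) is a perfectly sound, self-contained justification of the binary case, and your caution about not circularly invoking the dual identity in the reverse inclusion is well taken, though the direct observation that $\omega\notin A$ already implies $\omega\notin A\cap B$ makes the contradiction in the forward direction avoidable too. The one place where your write-up genuinely adds value over the paper's treatment is the closing remark on the $N$-ary generalization: the paper applies the lemma to a conjunction of $N$ truth values $P_1\land\cdots\land P_N$ arising from the $N$ half-plane inequalities in \eqref{eq3}, so the induction from the binary case to $\lnot(P_1\land\cdots\land P_N)\iff(\lnot P_1)\lor\cdots\lor(\lnot P_N)$ is exactly the form actually used in \eqref{eq4}, and making that step explicit closes a small gap that the paper leaves implicit.
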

 
\begin{prop}\label{prop1}
    Let a point $q$ and a convex polygon $\mathcal C$ be represented by (\ref{eq3}). Then 
$\tx{q} \cap \mathcal C=\emptyset, \iff \exists \boldsymbol{\lambda} \geq \boldsymbol{0}: (A \boldsymbol{q}- \boldsymbol{b})^{\top} \boldsymbol{\lambda} >0.$
\end{prop}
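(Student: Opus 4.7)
The plan is to establish the equivalence by reformulating the set-membership condition $q\in\mathcal C$ as a logical conjunction of half-plane inequalities, applying De Morgan's law to its negation, and then translating the resulting disjunction into the claimed linear-algebra statement involving the nonnegative multipliers $\boldsymbol{\lambda}$.

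First I would observe that, by the definition in (\ref{eq3}), $q\in\mathcal C$ is exactly the conjunction $\bigwedge_{i=1}^{N}\bigl(\boldsymbol{a_i}\boldsymbol{q}\le b_i\bigr)$. By Lemma~\ref{lemma1}, the statement $\tx{q}\cap\mathcal C=\emptyset$ is therefore equivalent to the disjunction $\bigvee_{i=1}^{N}\bigl(\boldsymbol{a_i}\boldsymbol{q}>b_i\bigr)$, i.e.\ to the existence of at least one index $i$ for which $(A\boldsymbol{q}-\boldsymbol{b})_i>0$. This is the only place where De Morgan's law enters, and it is the conceptual pivot of the proof.

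For the forward direction ($\Rightarrow$), I would pick such an index $i_0$ and set $\boldsymbol{\lambda}=\boldsymbol{e}_{i_0}$, the $i_0$-th standard basis vector. This vector is nonnegative, and $(A\boldsymbol{q}-\boldsymbol{b})^{\top}\boldsymbol{\lambda}=(A\boldsymbol{q}-\boldsymbol{b})_{i_0}>0$, as required. For the reverse direction ($\Leftarrow$), I would argue by contrapositive: if $q\in\mathcal C$ then $(A\boldsymbol{q}-\boldsymbol{b})_i\le 0$ for every $i$; combining this componentwise with $\lambda_i\ge 0$ gives $\lambda_i(A\boldsymbol{q}-\boldsymbol{b})_i\le 0$ for each $i$, so the sum $(A\boldsymbol{q}-\boldsymbol{b})^{\top}\boldsymbol{\lambda}\le 0$, contradicting the assumed strict inequality.

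The argument is elementary, so there is no genuine technical obstacle; the main thing to get right is the bookkeeping on signs. Specifically, one must be careful to invoke both the nonnegativity of $\boldsymbol{\lambda}$ and the nonpositivity of every component of $A\boldsymbol{q}-\boldsymbol{b}$ when $q\in\mathcal C$---if either sign condition were dropped, the $(\Leftarrow)$ direction would fail. It is also worth remarking that the construction of $\boldsymbol{\lambda}$ in the $(\Rightarrow)$ step is far from unique (any nonnegative vector supported on the set of violated constraints works), but the single-basis-vector choice makes the proof cleanest and highlights that only one violated inequality is needed to certify $\tx{q}\cap\mathcal C=\emptyset$.
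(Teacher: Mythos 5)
Your proof is correct and follows essentially the same route as the paper: De Morgan's law converts $q\in\mathcal C$ from a conjunction of half-plane inequalities into the existence of at least one violated index, which is then certified by a nonnegative $\boldsymbol{\lambda}$. Your write-up is in fact tighter than the paper's in two places --- you instantiate $\boldsymbol{\lambda}$ concretely as the standard basis vector $\boldsymbol{e}_{i_0}$ rather than appealing to ``sufficiently large'' and ``sufficiently small'' multipliers, and you explicitly prove the reverse implication via the sign bookkeeping $\lambda_i(A\boldsymbol{q}-\boldsymbol{b})_i\le 0$, a direction the paper leaves implicit.
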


\begin{proof}
The expression $\tx{q} \cap  \mathcal C \ne \emptyset$  states that q will be inside the polygon $\mathcal C$, which is identical to stating all inequalities
$\boldsymbol{a_1} \boldsymbol{q}- b_1 \le 0, \ \boldsymbol{a_2} \boldsymbol{q}- b_2 \le 0, \cdots, \ \boldsymbol{a_N} \boldsymbol{q}- b_{N} \le 0$ hold. Each of these constraints can be considered as a separate Boolean identity. Next, we use Boolean algebra theory \cite{ref29}. Let $\left\{P_1, P_2, \cdots, P_N\right\}$ be a set of $N$ truth values. Let 
\begin{equation*}
\boldsymbol{a_i} \boldsymbol{q}- b_i \le 0, \iff  P_i = \tx{true}, \ \ i=1,\cdots, N.
\end{equation*}
Then, $\tx{q}\ \cap \ \mathcal C \ne \emptyset$ equals to the following conjunction holding 
\begin{equation*}
P_1 \land  P_2 \land \cdots \land P_N = \tx{true}. 
\end{equation*}
Now we negate the conjunction based on Lemma~\ref{lemma1} as
\begin{equation}\label{eq4}
\begin{aligned}
\lnot \left(P_1 \land  P_2 \land \cdots \land P_N\right) \iff 
\left(\lnot P_1\right)\lor\left(\lnot P_2\right)\lor\cdots\lor\left(\lnot P_N\right),
\end{aligned}
\end{equation}
where $\land$, $\lor$, $\lnot$ are the AND, OR, NOT operator. From here, (\ref{eq4}) can be substituted by stating
\begin{equation}\label{eq5}
\tx{q} \cap \mathcal C =\emptyset
\iff \exists \  \boldsymbol{a_i} \boldsymbol{q}- b_i > 0,
\ \ i=1,\cdots, N. 
\end{equation}
This states that point q is outside the polygon $\mathcal C$, iff there is at least one constraint where the inequality in \eqref{eq5} is satisfied. To approach this, let the algorithm find a sufficiently large variable $\lambda_i \geq 0$ that multiples the constraint that is satisfied, i.e., $(\boldsymbol{a_i} \boldsymbol{q}- b_i)\lambda_i > 0$, and a sufficiently small $\lambda_j \geq 0$ that  multiples the constraint that is not satisfied, i.e., $(\boldsymbol{a_j} \boldsymbol{q}- b_j)\lambda_j \leq 0$. Then, summing them all together ensures the sum is positive, i.e., 
\begin{equation}\label{eq6}
(\boldsymbol{a_1} \boldsymbol{q}- b_1)\lambda_1+(\boldsymbol{a_2} \boldsymbol{q}- b_2)\lambda_2+\cdots+(\boldsymbol{a_N} \boldsymbol{q}- b_{N}) \lambda_{N}> 0.
\end{equation}
Let $\boldsymbol{\lambda}=[\lambda_1,\lambda_2,\cdots,\lambda_N]^{\top}$. Then, constraint (\ref{eq6}) 
can be rewritten as 
\begin{equation}\label{eq7}
(A \boldsymbol{q}-\boldsymbol{b})^{\top}\boldsymbol{\lambda} > 0,\  \exists 
 \boldsymbol{\lambda}\geq \boldsymbol{0},
 \end{equation} 
which proves Proposition~\ref{prop1}.
\end{proof}

\section{Collision avoidance constraints}\label{sec4}
In this section, We first introduce mathematical representations of vehicle, obstacles, and the driving environment. Then collision constraints (\ref{eq2e}), (\ref{eq2f}) are reformulated as explicit and differential constraints based on Proposition~\ref{prop1} and the following theorem:
\begin{theorem}[Hyperplane separation theorem]\label{theorem1}
    If two convex subsets of $\mathbb {R}^{2}$ are closed and at least one of them is compact, then the requirement that they are disjoint is identical to stating that there are two parallel hyperplanes in between them separated by a gap \cite{ref19, ref30}.
\end{theorem}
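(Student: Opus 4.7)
The plan is to establish both directions of the equivalence, with the forward direction (disjointness implies separating hyperplanes with a gap) being the substantive part. The reverse direction is immediate: if two parallel hyperplanes with a positive gap lie between the sets, the sets reside in disjoint closed half-spaces and therefore cannot intersect.

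For the forward direction, I would reduce the separation of two sets to the separation of a single set from the origin via the Minkowski difference. Let $A$, $B$ be the two closed convex subsets of $\mathbb{R}^2$ with, say, $A$ compact, and define $C = A - B = \{a - b : a \in A,\ b \in B\}$. Convexity transfers to $C$ because Minkowski sums of convex sets are convex. The compactness of $A$ combined with the closedness of $B$ ensures $C$ is closed: for any sequence $c_n = a_n - b_n \to c$, compactness of $A$ extracts a subsequence with $a_{n_k} \to a \in A$, whence $b_{n_k} = a_{n_k} - c_{n_k} \to a - c \in B$, giving $c \in C$. Finally, $A \cap B = \emptyset$ translates exactly to $0 \notin C$.

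Next I would apply the closest-point argument. Since $C$ is nonempty, closed, convex, and does not contain the origin, there exists a nearest point $c^\star = \arg\min_{c \in C}\|c\|$ with $\|c^\star\| > 0$; existence follows by restricting the minimization to $C \cap \overline{B(0,R)}$ for any $R$ larger than the norm of some fixed point in $C$, which is compact in $\mathbb{R}^2$ by Heine--Borel. Differentiating $\|c^\star + t(c - c^\star)\|^2 \geq \|c^\star\|^2$ at $t = 0^+$ yields the variational inequality $\langle c^\star,\, c - c^\star\rangle \geq 0$ for all $c \in C$, i.e., $\langle c^\star, c\rangle \geq \|c^\star\|^2$. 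Substituting $c = a - b$ gives $\langle c^\star, a\rangle - \langle c^\star, b\rangle \geq \|c^\star\|^2$ for every $a \in A$ and $b \in B$. Setting $\alpha = \inf_{a \in A}\langle c^\star, a\rangle$ and $\beta = \sup_{b \in B}\langle c^\star, b\rangle$ yields $\alpha - \beta \geq \|c^\star\|^2 > 0$, so the parallel hyperplanes $\{x : \langle c^\star, x\rangle = \alpha\}$ and $\{x : \langle c^\star, x\rangle = \beta\}$ lie between $B$ and $A$ with Euclidean gap $(\alpha - \beta)/\|c^\star\| \geq \|c^\star\| > 0$.

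The main obstacle is establishing closedness of the Minkowski difference $C$, which is precisely where the compactness hypothesis on at least one of $A$ or $B$ enters. Without compactness the conclusion can fail, as illustrated by two disjoint closed convex planar regions whose boundaries share a common asymptote: they are disjoint yet admit no separating pair of parallel hyperplanes with a positive gap. Thus compactness is not merely a technical convenience but essential; once closedness of $C$ is secured, the remainder reduces to a standard projection-theorem computation in $\mathbb{R}^2$.
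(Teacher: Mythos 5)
The paper does not actually prove Theorem~1; it is stated as a known result with citations to the convex-optimization literature (Boyd--Vandenberghe and Soltan). Your argument is correct and is precisely the standard strong-separation proof found in those sources: reduce to separating the origin from the Minkowski difference $C=A-B$ (compactness of one set being exactly what keeps $C$ closed), project the origin onto $C$, and read the two parallel hyperplanes off the variational inequality $\langle c^\star, c\rangle \geq \|c^\star\|^2$; your asymptote counterexample also correctly pinpoints why the compactness hypothesis cannot be dropped.
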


\subsection{Vehicle, obstacle and environment modeling}
Let the environment be modelled as a convex polygon $\mathcal W$ described by a linear matrix inequality as 
\begin{equation}\label{eq8}
\mathcal{W}:=\left\{\boldsymbol{p} \in \mathbb{R}^2 \mid E \boldsymbol{p} \leq \boldsymbol{f}\right\},
\end{equation}
where $E \in \mathbb{R}^{N_{\mathcal W} \times 2}$, $\boldsymbol{f} \in \mathbb{R}^{N_{\mathcal W}}$.

In contrast to the environment, a vehicle or obstacle is modeled as a more general union of polygons, denoted as $\mathcal B=\bigcup \mathcal B_m$, $\mathcal O=\bigcup \mathcal O_n $. We use a union, since the polygons may be disjoint. Then constraints \eqref{eq2f} are formulated as
\begin{equation}\label{eq9}
{\mathcal B_m(\boldsymbol{\xi}) \cap \mathcal O_n=\emptyset}, \ \forall  m, n.
\end{equation}
 
Typically, each $\mathcal O_n$ is a nonconvex polygon while each $\mathcal B_m$ can either be a convex or nonconvex polygon. For generality, we assume each $\mathcal B_m$ as a nonconvex polygon. When considering collision avoidance for nonconvex polygons, one prevalent way is decomposing them into a union of convex polygons  \cite{ref10, ref11}. Following this idea, polygon $\mathcal B_m$ or $\mathcal O_n$ is formulated as a union of convex polygons, defined as $\mathcal B_m=\bigcup \mathcal B_m^u$, $\mathcal  O_n=\bigcup \mathcal O_n^\nu $. Thus, (\ref{eq9}) can be substituted with
\begin{equation}\label{eq10}
{\mathcal B_m^u(\boldsymbol{\xi}) \cap \mathcal O_n^\nu=\emptyset}, \ \forall  m, n,u,\nu.
\end{equation}
Here $\mathcal B_m^u$ and $\mathcal O_n^\nu$ represent two convex sets. Let $\mathcal B_m^u$ be denoted as an ordered linear matrix inequality 
\begin{equation}\label{eq11}
\mathcal B_m^u :=\left\{\boldsymbol{q}(\boldsymbol{\xi}) \in \mathbb{R}^2 \mid \ C  \boldsymbol{q}(\boldsymbol{\xi}) \leq \boldsymbol{d}\right\},
\end{equation}
where $C=\left[\boldsymbol{c_1}, \ldots, \boldsymbol{c_{N_m^u}}\right]^{\top} \in \mathbb{R}^{N_m^u \times 2}$ and $\boldsymbol{d} \in \mathbb{R}^{N_m^u}$.  Here, $\boldsymbol{c_i^{\top}}$ and $d_i$ denote the {\it i}-th  row vector in $C$ and the {\it i}-th entry in $\boldsymbol{d}$ respectively. The convex polygon $\mathcal O_n^\nu$ is represented according to (\ref{eq3}), i.e.,
\begin{equation}\label{eq12}
\mathcal O_n^\nu:=\left\{\boldsymbol{q} \in \mathbb{R}^2 \mid A \boldsymbol{q} \leq \boldsymbol{b}\right\},
\end{equation}
where $A=\left[\boldsymbol{a_1}, \ldots, \boldsymbol{a_{N_n^\nu}}\right]^{\top} \in \mathbb{R}^{N_n^\nu \times 2}$ and $\boldsymbol{b} \in \mathbb{R}^{N_n^\nu}$. $\boldsymbol{a_j^{\top}}$ and $b_j$ are the {\it j}-th row vector in $A$ and the {\it j}-th entry in $\boldsymbol{b}$. 

Let $V=\left[\boldsymbol{v_1}, \ldots, \boldsymbol{v_{N_m^u}}\right]\in\mathbb R^{2\times N_m^u}, O=\left[\boldsymbol{o_1}, \ldots, \boldsymbol{o_{N_n^\nu}}\right]\in\mathbb R^{2\times N_n^\nu}$ return two matrices of vertices in $\mathcal B_m^u$ and $\mathcal O_n^\nu$, respectively. Symbols  $\boldsymbol{v_i}$ and $\boldsymbol{o_j} $ describe the positions of the {\it i}-th vertex and {\it j}-th vertex of $\mathcal B_m^u$ and $\mathcal O_n^\nu$, respectively. Vertex $\boldsymbol{v_i}$ can be obtained by  $\boldsymbol{c_i}^{\top} \boldsymbol{v_i} = d_i, \boldsymbol{c_{i+1}}^{\top} \boldsymbol{v_i} = d_{i+1}, i=1,\cdots, N_m^u-1$, $\boldsymbol{c_i}^{\top} \boldsymbol{v_i} = d_i, \boldsymbol{c_1}^{\top} \boldsymbol{v_i} = d_{1}, i=N_m^u$; $\boldsymbol{o_j} $ is calculated by $\boldsymbol{a_j}^{\top} \boldsymbol{o_j} = b_j, \boldsymbol{a_{j+1}}^{\top} \boldsymbol{o_j} = b_{j+1}, j=1,\cdots, N_n^\nu-1;\boldsymbol{a_j}^{\top} \boldsymbol{o_j} = b_j, \boldsymbol{a_{1}}^{\top} \boldsymbol{o_j} = b_{1}, j= N_n^\nu$.

\subsection{Collision avoidance formulations}
In the OCP (\ref{eq2}), constraint (\ref{eq2e}) enforces the vehicle to not collide with the environment boundary, i.e., each vertex of $\mathcal B_m^u$ must stay within $\mathcal W$. Since $\mathcal W$ is convex, (\ref{eq2e}) can be formulated as a linear inequality
\begin{equation}\label{eq13}
E \boldsymbol{v_i}\leq \boldsymbol{f},\ \forall i,m, u.
\end{equation}
Formulating the constraint (\ref{eq10}) between the vehicle and obstacles in a smooth form requires multiple steps, but it can be achieved in different ways. Next, we introduce three exact smooth formulations of this constraint. 

The sole criterion for achieving $\mathcal B_m^u(\boldsymbol{\xi}) \cap \mathcal O_n^\nu=\emptyset$ is identical to stating that any vehicle polygon $\mathcal B_m^u$ does not overlap with an obstacle polygon $\mathcal O_n^\nu$. When $\mathcal B_m^u$ intersects with $\mathcal O_n^\nu$, some vertices of $\mathcal B_m^u$ are bound to enter $\mathcal O_n^\nu$, or conversely, vertices belonging to $\mathcal O_n^\nu$ may enter $\mathcal B_m^u$. In order to avoid collisions, the vertices of $\mathcal B_m^u$ and $\mathcal O_n^\nu$ should be kept outside each other. To approach this, vertex constraints are first constructed to separate a vertex of $\mathcal B_m^u$ from $\mathcal O_n^\nu$, and equivalently, a vertex of $\mathcal O_n^\nu$ from $\mathcal B_m^u$ by exploiting Proposition~\ref{prop1}. Then, based on vertex constraints, $\mathcal B_m^u(\boldsymbol{\xi}) \cap \mathcal O_n^\nu=\emptyset$ can be formulated by checking all vertices of the vehicle and obstacles.

\begin{prop}\label{prop2}
Let $\mathcal B_m^u$ and $\mathcal O_n^\nu$ be denoted by (\ref{eq11}) and (\ref{eq12}), respectively. Then, 
\begin{equation}\label{eq14}
\begin{aligned}
\mathcal B_m^u(\boldsymbol{\xi}) \cap \mathcal O_n^\nu=\emptyset \iff \exists \Lambda  \succeq 0_{N_m^u, N_n^\nu}, \Omega \succeq  0_{N_n^\nu, N_m^u}: \\
\begin{array}{c}
(C O- \boldsymbol{d}\boldsymbol{1}^{\top})^{\top} \Lambda \succ 0_{N_n^\nu, N_n^\nu},  \\
(A V- \boldsymbol{b}\boldsymbol{1}^{\top})^{\top} \Omega \succ 0_{N_m^u, N_m^u},
\end{array}
\end{aligned}
\end{equation}
where $\boldsymbol{1}=(1,\cdots,1)^\top$ is a vector with all 1, and $0_{(\cdot),(\cdot)}$ be a matrix with all 0. Symbols $\succeq$ , $\succ$ represent elementwise inequalities.
\end{prop}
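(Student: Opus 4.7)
The plan is to reduce the matrix inequality in Proposition~\ref{prop2} to the vertex-wise picture supplied by Proposition~\ref{prop1}, then to upgrade that picture to genuine set separation via Theorem~\ref{theorem1}. First I would unpack the matrix product: writing $\boldsymbol{\lambda_{j'}}$ for the $j'$-th column of $\Lambda$, the $(j,j')$ entry of $(CO-\boldsymbol{d}\boldsymbol{1}^{\top})^{\top}\Lambda$ equals $(C\boldsymbol{o_j}-\boldsymbol{d})^{\top}\boldsymbol{\lambda_{j'}}$. Hence elementwise positivity is equivalent to the existence of a single nonnegative vector $\boldsymbol{\lambda}\in\mathbb{R}^{N_m^u}$ satisfying $(C\boldsymbol{o_j}-\boldsymbol{d})^{\top}\boldsymbol{\lambda}>0$ for every vertex $\boldsymbol{o_j}$ of $\mathcal O_n^\nu$ (any column of $\Lambda$ works, and copying it into all $N_n^\nu$ columns certifies the full matrix). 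The analogous reduction handles $\Omega$ via a single $\boldsymbol{\omega}\ge\boldsymbol{0}$ that separates all vertices $\boldsymbol{v_i}$ of $\mathcal B_m^u$ from $\mathcal O_n^\nu$.

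For the ``$\Leftarrow$'' direction I would fix such a $\boldsymbol{\lambda}$, set $\boldsymbol{\alpha}=C^{\top}\boldsymbol{\lambda}$ and $\beta=\boldsymbol{d}^{\top}\boldsymbol{\lambda}$, and observe that $\boldsymbol{\alpha}^{\top}\boldsymbol{o_j}-\beta=(C\boldsymbol{o_j}-\boldsymbol{d})^{\top}\boldsymbol{\lambda}>0$ at every vertex, while $\boldsymbol{\alpha}^{\top}\boldsymbol{q}\le\beta$ for all $\boldsymbol{q}\in\mathcal B_m^u$ (multiply $C\boldsymbol{q}\le\boldsymbol{d}$ by $\boldsymbol{\lambda}\ge\boldsymbol{0}$). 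Because $\mathcal O_n^\nu$ is the convex hull of its vertices, the strict inequality extends by convex combination to all of $\mathcal O_n^\nu$, so the hyperplane $\boldsymbol{\alpha}^{\top}\boldsymbol{x}=\beta$ strictly separates the two polygons and forces $\mathcal B_m^u\cap\mathcal O_n^\nu=\emptyset$. The $\Omega$-certificate supplies the symmetric separating hyperplane but is redundant for sufficiency.

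For the ``$\Rightarrow$'' direction, I would apply Theorem~\ref{theorem1} to the compact convex polygons $\mathcal B_m^u$ and $\mathcal O_n^\nu$ to obtain a strict separating hyperplane $\boldsymbol{\alpha}^{\top}\boldsymbol{x}=\beta$ with $\boldsymbol{\alpha}^{\top}\boldsymbol{q}\le\beta<\boldsymbol{\alpha}^{\top}\boldsymbol{p}$ for $\boldsymbol{q}\in\mathcal B_m^u$, $\boldsymbol{p}\in\mathcal O_n^\nu$. Since $\boldsymbol{\alpha}^{\top}\boldsymbol{x}\le\beta$ is implied by the nonempty polyhedral system $C\boldsymbol{x}\le\boldsymbol{d}$, Farkas' lemma (equivalently, LP duality) furnishes $\boldsymbol{\lambda}\ge\boldsymbol{0}$ with $C^{\top}\boldsymbol{\lambda}=\boldsymbol{\alpha}$ and $\boldsymbol{d}^{\top}\boldsymbol{\lambda}\le\beta$. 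Then $(C\boldsymbol{o_j}-\boldsymbol{d})^{\top}\boldsymbol{\lambda}=\boldsymbol{\alpha}^{\top}\boldsymbol{o_j}-\boldsymbol{d}^{\top}\boldsymbol{\lambda}\ge\boldsymbol{\alpha}^{\top}\boldsymbol{o_j}-\beta>0$ for each $j$, and duplicating this $\boldsymbol{\lambda}$ across all columns of $\Lambda$ verifies the matrix inequality. Constructing $\Omega$ is identical after swapping the roles of the two polygons and using the half-space $-\boldsymbol{\alpha}^{\top}\boldsymbol{x}\le-\beta$.

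The step I expect to be the main obstacle is the Farkas invocation in the forward direction: Proposition~\ref{prop1} alone only certifies per-vertex separation, and merely stacking per-vertex multipliers $\boldsymbol{\lambda_j}$ would fill in only the \emph{diagonal} entries of $(CO-\boldsymbol{d}\boldsymbol{1}^{\top})^{\top}\Lambda$, not the full elementwise positivity demanded by the statement. Turning the hyperplane delivered by Theorem~\ref{theorem1} into a single nonnegative combination of the face-normals in $C$ is exactly the content of LP duality, and the strict-versus-weak bookkeeping must be handled carefully (using compactness to separate strictly and then absorbing the positive gap into the Farkas multipliers) so that the resulting inequalities remain strict rather than merely nonnegative.
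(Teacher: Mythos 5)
Your proof is correct, but it is worth noting that the paper supplies no formal proof of Proposition~\ref{prop2} at all: it only asserts that \eqref{eq14} follows ``based on Proposition~\ref{prop1}'' applied to the vertices, i.e.\ the vertex-checking narrative in the surrounding text. Your route is genuinely different and, in fact, repairs a gap in that narrative. As you observe, stacking per-vertex multipliers from Proposition~\ref{prop1} only certifies the \emph{diagonal} of $(CO-\boldsymbol{d}\boldsymbol{1}^{\top})^{\top}\Lambda$, and the diagonal condition is equivalent to ``every vertex of each polygon lies outside the other,'' which is necessary but not sufficient for disjointness of two convex polygons (two thin rectangles crossing in a plus shape intersect while all eight vertices are exterior). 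The full elementwise positivity actually written in \eqref{eq14} is strictly stronger: it forces a single nonnegative column $\boldsymbol{\lambda}$ of $\Lambda$ to satisfy $(C\boldsymbol{o_j}-\boldsymbol{d})^{\top}\boldsymbol{\lambda}>0$ for every $j$ simultaneously, so that $\boldsymbol{\alpha}=C^{\top}\boldsymbol{\lambda}$, $\beta=\boldsymbol{d}^{\top}\boldsymbol{\lambda}$ define a genuine separating hyperplane (and $\boldsymbol{\alpha}\neq\boldsymbol{0}$ is automatic since $\mathcal B_m^u\neq\emptyset$ would otherwise force $(C\boldsymbol{o_j}-\boldsymbol{d})^{\top}\boldsymbol{\lambda}=-\boldsymbol{d}^{\top}\boldsymbol{\lambda}\le 0$). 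Your forward direction via Theorem~\ref{theorem1} together with the affine Farkas lemma is also sound, and your bookkeeping is right: the compactness-induced gap $\beta_2-\beta_1>0$ is exactly what survives the inequality $\boldsymbol{d}^{\top}\boldsymbol{\lambda}\le\beta_1$ and keeps every matrix entry strictly positive. In short, you prove the proposition as literally stated (full elementwise positivity $\Longleftrightarrow$ two-sided hyperplane separation), whereas the paper's implicit justification establishes only the weaker diagonal version; this distinction also explains why \eqref{eq14}, read as the authors describe it in prose and in their later remark on narrow obstacles, would not by itself exclude ``crossing'' intersections, while the formula as printed does.
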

Based on Proposition~\ref{prop1}, (\ref{eq14}) explicitly formulates constraints (\ref{eq10}) by introducing auxiliary variables to be optimized associated with vertices of $\mathcal B_m^u$ and $\mathcal O_n^\nu$. 

Another way of reformulating constraints (\ref{eq10}) is by exploiting Theorem~\ref{theorem1}. 

\begin{lemma}\label{lemma2}
Let $S \subset \mathbb{R}^n$. Then the following two statements are equivalent \cite{ref31}:
\begin{enumerate}
\item{$S$ is closed and bounded.} 
\item{$S$ is compact, that is, every open cover of $S$ has a finite subcover.}
\end{enumerate}
\end{lemma}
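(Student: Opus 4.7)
The plan is to prove the two classical directions of the Heine--Borel theorem separately, since the equivalence between compactness (open cover definition) and closed-and-bounded in $\mathbb{R}^n$ is a standard real-analysis result.

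First I would prove $(1)\Rightarrow(2)$. Since $S$ is bounded, it is contained in some closed box $B=[-R,R]^n$. The strategy is to show (a) that $B$ itself is compact, and (b) that any closed subset of a compact set is compact, from which the conclusion follows because $S$ is closed in $\mathbb{R}^n$ and hence closed in $B$. For (a) I would use the standard bisection argument: assume for contradiction that some open cover $\{U_\alpha\}$ of $B$ admits no finite subcover, then bisect $B$ into $2^n$ closed subboxes; at least one must fail to admit a finite subcover, and iterating produces a nested sequence of closed boxes whose diameters tend to zero. By completeness of $\mathbb{R}^n$ their intersection is a single point $x\in B$; this $x$ lies in some $U_{\alpha_0}$, and because $U_{\alpha_0}$ is open it contains one of the sufficiently small boxes in the sequence, contradicting the assumption. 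For (b), given an open cover $\{U_\alpha\}$ of the closed subset $S\subset B$, the family $\{U_\alpha\}\cup\{\mathbb{R}^n\setminus S\}$ is an open cover of $B$; extracting a finite subcover and discarding the complement yields a finite subcover of $S$.

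Next I would prove $(2)\Rightarrow(1)$. For boundedness, consider the open cover $\{B(x,1):x\in S\}$ of $S$ by unit balls; by compactness there is a finite subcover $B(x_1,1),\dots,B(x_k,1)$, and then $S$ is contained in a ball of radius $1+\max_{i,j}\|x_i-x_j\|$, so $S$ is bounded. For closedness, I would show the complement is open: fix any $y\notin S$, and for each $x\in S$ pick disjoint open balls $B(x,r_x)$ and $B(y,r_x)$ with $r_x=\tfrac12\|x-y\|$. The family $\{B(x,r_x):x\in S\}$ is an open cover of $S$, so by compactness there is a finite subcover corresponding to points $x_1,\dots,x_m$; then $B(y,\min_i r_{x_i})$ is disjoint from $S$, proving $y$ is an interior point of $\mathbb{R}^n\setminus S$.

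The main obstacle is step (a) in the first direction, namely establishing that the closed box $B$ is compact: this requires the bisection-plus-completeness argument and is the only place where the specific topology of $\mathbb{R}^n$ (as opposed to an arbitrary metric space) is used in an essential way. The remaining pieces --- closed subsets of compacts being compact, and the routine boundedness/closedness arguments in $(2)\Rightarrow(1)$ --- are short and formally similar to each other, relying only on manipulating open covers and finite subcovers.
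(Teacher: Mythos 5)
Your outline is a correct and complete proof of the Heine--Borel theorem: the bisection argument for compactness of the closed box, the ``closed subset of a compact set is compact'' reduction, and the standard cover arguments for boundedness and closedness in the converse direction are all sound. Note, however, that the paper does not prove this lemma at all --- it is stated as a known result and cited to the reference on the Heine--Borel theorem --- so your argument supplies a proof where the paper relies on the literature; it is the classical textbook proof and there is nothing in the paper to compare it against.
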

In formulations (\ref{eq11}) and (\ref{eq12}), $\mathcal B_m^u$ and $\mathcal O_n^\nu$ are two closed and bounded sets. According to Lemma~\ref{lemma2}, these sets are also compact. 
What's more, based on Theorem~\ref{theorem1}, when set $\mathcal B_m^u$ and set $\mathcal O_n^\nu$ do not intersect, there exists a separating gap between them, i.e., points belonging to $\mathcal B_m^u$ are on one side of the gap while points of $\mathcal O_n^\nu$ are on the other side. However, checking all points of $\mathcal B_m^u$ and $\mathcal O_n^\nu$ is computationally expensive. One efficient way is only checking vertices of $\mathcal B_m^u$ and $\mathcal O_n^\nu$. 
\begin{lemma}\label{lemma3}
Any interior point of a convex polygon can be expressed as a convex combination of its vertices, where all coefficients are non-negative and sum to 1 \cite{ref32}.
\end{lemma}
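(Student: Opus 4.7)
The plan is to prove the lemma by triangulating the polygon from a fixed vertex and then computing explicit barycentric coordinates, which works cleanly in $\mathbb{R}^2$ and yields an immediate construction of the convex combination. An equivalent but shorter alternative would be to cite Carathéodory's theorem, but an explicit constructive argument is better aligned with the style of Section~\ref{sec3} and makes the coefficients usable in subsequent formulations.

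First I would fix a vertex, say $\boldsymbol{v_1}$, and form the $N-2$ triangles $T_k=\mathrm{conv}\{\boldsymbol{v_1},\boldsymbol{v_k},\boldsymbol{v_{k+1}}\}$ for $k=2,\ldots,N-1$. Convexity of $\mathcal C$ ensures each diagonal $\boldsymbol{v_1}\boldsymbol{v_k}$ lies inside $\mathcal C$, and a short induction on $N$ (removing the ear $T_{N-1}$ at each step) shows that these triangles tile the polygon, i.e.,
\begin{equation*}
\mathcal C \;=\; \bigcup_{k=2}^{N-1} T_k .
\end{equation*}
Given any interior point $\boldsymbol{p}\in\mathcal C$, I would then locate an index $k^\star$ with $\boldsymbol{p}\in T_{k^\star}$ and solve the affine system
\begin{equation*}
\boldsymbol{p}=\alpha_1\boldsymbol{v_1}+\alpha_{k^\star}\boldsymbol{v_{k^\star}}+\alpha_{k^\star+1}\boldsymbol{v_{k^\star+1}},\quad \alpha_1+\alpha_{k^\star}+\alpha_{k^\star+1}=1,
\end{equation*}
which admits a unique solution because the three selected vertices are affinely independent (the ordered representation in (\ref{eq11})–(\ref{eq12}) rules out degenerate collinear triples). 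Nonnegativity of the three coefficients is precisely the statement $\boldsymbol{p}\in T_{k^\star}$. Setting $\alpha_i=0$ for all remaining indices extends this to a convex combination over the full vertex set $\{\boldsymbol{v_1},\ldots,\boldsymbol{v_N}\}$ whose coefficients are nonnegative and sum to one, as required.

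The main obstacle I expect is not the barycentric computation but the triangulation step, that is, rigorously justifying that the union of the fan $\{T_k\}_{k=2}^{N-1}$ equals $\mathcal C$. The inclusion $\bigcup T_k\subseteq\mathcal C$ is immediate from convexity, but the reverse inclusion needs a careful argument that the consecutive diagonals partition the polygon without gaps; I would handle this by induction on $N$, using the base case $N=3$ (trivial) and, at the inductive step, observing that $\boldsymbol{v_1}\boldsymbol{v_{N-1}}$ cuts $\mathcal C$ into the ear $T_{N-1}$ and a convex $(N-1)$-gon to which the hypothesis applies. If this geometric bookkeeping becomes cumbersome, a fallback is to invoke Carathéodory's theorem directly, which states that any point in the convex hull of a finite set in $\mathbb{R}^2$ is a convex combination of at most three of its points; this yields the lemma in one line at the price of using a more abstract result.
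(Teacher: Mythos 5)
Your proof is correct, but note that the paper does not actually prove Lemma~\ref{lemma3} at all: it is stated as a known fact and discharged by the citation \cite{ref32}. So there is nothing in the paper to match your argument against; what you have done is supply the missing elementary proof. Your fan triangulation from $\boldsymbol{v_1}$ is sound: convexity guarantees each diagonal $\boldsymbol{v_1}\boldsymbol{v_k}$ lies in $\mathcal C$, the ear-removal induction does show $\mathcal C=\bigcup_{k=2}^{N-1}T_k$, and affine independence of any three vertices follows simply from the fact that vertices of a convex polygon are extreme points (an extreme point cannot lie on the segment between two others), so you do not really need to lean on the ordered representation in (\ref{eq11})--(\ref{eq12}) for nondegeneracy. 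Your construction in fact proves something slightly stronger than the lemma as stated, namely that three vertices always suffice, and it applies to every point of $\mathcal C$, not only interior points. One caution about your proposed shortcut: Carath\'eodory's theorem only converts ``$\boldsymbol{p}\in\mathrm{conv}(\mathcal V)$'' into ``$\boldsymbol{p}$ is a convex combination of at most three vertices,'' so invoking it still presupposes $\mathcal C=\mathrm{conv}(\mathcal V)$, which is essentially the content of the lemma (Minkowski's theorem on extreme points of a compact convex set); the fallback is therefore not quite the one-liner you suggest, and the triangulation route (or an explicit appeal to Minkowski's theorem) is the honest way to close the argument.
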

\begin{prop}\label{prop3}
Let $\mathcal V$ represent the set of vertices in $\mathcal B_m^u$ and let $\mathcal H^{-}(\boldsymbol{\lambda}, \mu)=\left\{\boldsymbol{q} \in \mathbb{R}^2 \mid \boldsymbol{\lambda}^\top \boldsymbol{q} < \mu\right\}$ be the half-space with outward normal vector $\boldsymbol{\lambda}$. Then,
\begin{equation}\label{eq15}
\begin{aligned}
\mathcal V \subset \mathcal H^{-}(\boldsymbol{\lambda}, \mu) \iff \mathcal B_m^u \subset \mathcal H^{-}(\boldsymbol{\lambda}, \mu).
\end{aligned}
\end{equation}
\end{prop}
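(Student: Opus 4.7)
The plan is to prove the two directions of the equivalence separately, with the reverse implication being essentially immediate and the forward implication resting on Lemma~\ref{lemma3}. The backward direction ($\Leftarrow$) is trivial: since the vertex set $\mathcal V$ is a subset of the polygon $\mathcal B_m^u$, any containment of $\mathcal B_m^u$ in the half-space automatically passes to $\mathcal V$. So all the real work is in the forward direction ($\Rightarrow$).

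For the forward direction, I would assume $\mathcal V \subset \mathcal H^{-}(\boldsymbol{\lambda},\mu)$, which means $\boldsymbol{\lambda}^\top \boldsymbol{v_i} < \mu$ for every vertex $\boldsymbol{v_i}$ of $\mathcal B_m^u$, and then take an arbitrary point $\boldsymbol{q} \in \mathcal B_m^u$ and show that $\boldsymbol{\lambda}^\top \boldsymbol{q} < \mu$. By Lemma~\ref{lemma3}, I can write $\boldsymbol{q} = \sum_i \alpha_i \boldsymbol{v_i}$ with $\alpha_i \geq 0$ and $\sum_i \alpha_i = 1$. Then linearity of the inner product gives
\begin{equation*}
\boldsymbol{\lambda}^\top \boldsymbol{q} = \sum_i \alpha_i\, \boldsymbol{\lambda}^\top \boldsymbol{v_i} < \sum_i \alpha_i\, \mu = \mu,
\end{equation*}
where the strict inequality follows because at least one $\alpha_i$ is strictly positive (their sum is $1$) and each $\boldsymbol{\lambda}^\top \boldsymbol{v_i}$ is strictly less than $\mu$. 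This shows $\boldsymbol{q} \in \mathcal H^{-}(\boldsymbol{\lambda},\mu)$ and, since $\boldsymbol{q}$ was arbitrary, that $\mathcal B_m^u \subset \mathcal H^{-}(\boldsymbol{\lambda},\mu)$.

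The main subtlety I anticipate is the handling of the strict inequality together with boundary points of $\mathcal B_m^u$. Lemma~\ref{lemma3} is phrased for interior points, but the standard convex-hull characterization of a polygon lets every point of $\mathcal B_m^u$, boundary included, be represented as such a convex combination of vertices; I would invoke this extension explicitly (or remark that a boundary point is a limit of interior points and apply the same combination argument) so that the conclusion covers all of $\mathcal B_m^u$. The strict inequality itself is not an obstacle because the convex-combination coefficients sum to one, which forces the weighted average of quantities strictly below $\mu$ to remain strictly below $\mu$; no additional compactness or limiting argument is needed beyond this observation.
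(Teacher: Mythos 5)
Your proof is correct and follows essentially the same route as the paper's: the backward direction via $\mathcal V \subset \mathcal B_m^u$, and the forward direction via Lemma~\ref{lemma3} and the fact that a convex combination (coefficients summing to one) of values strictly below $\mu$ stays strictly below $\mu$ --- the paper merely phrases this as a contradiction where you argue directly. Your remark that Lemma~\ref{lemma3} as stated covers only interior points, so the representation must be extended to boundary points of $\mathcal B_m^u$, is a legitimate gap in the paper's own argument that you handle more carefully.
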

\begin{proof}
For the set of vertices, we have $\mathcal V \subset \mathcal B_m^u$. Then, if $\mathcal B_m^u \subset \mathcal H^{-}(\boldsymbol{\lambda}, \mu)$ it follows that $\mathcal V \subset \mathcal H^{-}(\boldsymbol{\lambda}, \mu)$. 
This proves the right-to-left implication. The left-to-right implication, given $\mathcal V \subset \mathcal H^{-}(\boldsymbol{\lambda}, \mu)$ will be proven by contradiction.
Assume $\exists\boldsymbol{q} \in \mathcal B_m^u $, $ \boldsymbol{q}\notin \mathcal H^{-}(\boldsymbol{\lambda}, \mu)$, such that $\boldsymbol{\lambda}^\top \boldsymbol{q} \geq \mu$. Based on Lemma~\ref{lemma3}, $\boldsymbol{q}=\sum_{i=1}^{N_m^u}\alpha_i \boldsymbol{v_i}$, where $\alpha_i \geq 0$. Then, $\sum_{i=1}^{N_m^u}\alpha_i \boldsymbol{\lambda}^\top \boldsymbol{v_i} \geq \mu$,  indicates that $\exists i, \boldsymbol{\lambda}^\top \boldsymbol{v_i} \geq \mu$, i.e., $\boldsymbol{v_i} \notin \mathcal H^{-}(\boldsymbol{\lambda}, \mu)$, which contradicts with $\mathcal V \subset \mathcal H^{-}(\boldsymbol{\lambda}, \mu)$. So, it follows that $\forall \boldsymbol{q} \in \mathcal B_m^u$, $\boldsymbol{q} \in \mathcal H^{-}(\boldsymbol{\lambda}, \mu)$, i.e., $ \mathcal B_m^u \subset \mathcal H^{-}(\boldsymbol{\lambda}, \mu)$, which completes the proof. 
\end{proof}
Using Proposition~\ref{prop3}, the collision constraints (\ref{eq10}) can be reformulated by ensuring that the vertices of $\mathcal B_m^u$ are on one side of a separating gap and vertices of $\mathcal O_n^\nu$ are on the other side.

\begin{prop}\label{prop4}
Let $V$ and $O$ represent matrices of all the vertices of $\mathcal B_m^u$ and $\mathcal O_n^\nu$, respectively. Then, 
\begin{equation}\label{eq16}
\begin{aligned}
\mathcal B_m^u(\boldsymbol{\xi}) \cap \mathcal O_n^\nu=\emptyset \iff  \exists \boldsymbol{\lambda} \in \mathbb{R}^2, \mu_1, \mu_2 \in \mathbb{R}: \ \ \ \ \ \ \ \ \   \\ 
  \begin{array}{c}
\boldsymbol{\lambda}^\top V > \mu_1 \boldsymbol{1}^\top, \boldsymbol{\lambda}^\top O < \mu_2 \boldsymbol{1}^\top, \mu_1 > \mu_2, \left\|\boldsymbol{\lambda}\right\| > 0.
\end{array}
\end{aligned}
\end{equation}
\end{prop}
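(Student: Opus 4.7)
The plan is to prove Proposition~\ref{prop4} by establishing the two implications separately, using Theorem~\ref{theorem1} for the forward direction and Proposition~\ref{prop3} for the reverse direction, in much the same spirit as the earlier results but now lifted from the vertex-level statement of Proposition~\ref{prop3} to a full separation statement between two compact convex polygons.

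For the forward implication, I would first observe that by construction $\mathcal B_m^u$ and $\mathcal O_n^\nu$ are both closed and bounded, so Lemma~\ref{lemma2} makes them compact convex sets. Assuming they are disjoint, Theorem~\ref{theorem1} furnishes two parallel hyperplanes separated by a positive gap, which I would parameterize by a common nonzero normal vector $\boldsymbol{\lambda}\in\mathbb{R}^2$ and two offsets $\mu_1>\mu_2$ such that $\boldsymbol{\lambda}^\top \boldsymbol{q} > \mu_1$ for every $\boldsymbol{q}\in\mathcal B_m^u$ and $\boldsymbol{\lambda}^\top \boldsymbol{q} < \mu_2$ for every $\boldsymbol{q}\in\mathcal O_n^\nu$. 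Specializing these pointwise inequalities to the columns of $V$ and $O$ yields the matrix relations $\boldsymbol{\lambda}^\top V > \mu_1 \boldsymbol{1}^\top$ and $\boldsymbol{\lambda}^\top O < \mu_2 \boldsymbol{1}^\top$, while $\|\boldsymbol{\lambda}\|>0$ and $\mu_1>\mu_2$ record the nondegeneracy of the separator.

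For the reverse implication, I would start from $\boldsymbol{\lambda}$, $\mu_1$, $\mu_2$ satisfying the right-hand side. The inequality $\boldsymbol{\lambda}^\top V > \mu_1 \boldsymbol{1}^\top$ says that every vertex of $\mathcal B_m^u$ lies in the half-space $\{\boldsymbol{q}:\boldsymbol{\lambda}^\top \boldsymbol{q}>\mu_1\}$, which is $\mathcal H^{-}(-\boldsymbol{\lambda},-\mu_1)$ in the notation of Proposition~\ref{prop3}. Applying Proposition~\ref{prop3} with the pair $(-\boldsymbol{\lambda},-\mu_1)$ lifts this to $\mathcal B_m^u\subset\{\boldsymbol{q}:\boldsymbol{\lambda}^\top \boldsymbol{q}>\mu_1\}$, and applying it to $\mathcal O_n^\nu$ with $(\boldsymbol{\lambda},\mu_2)$ gives $\mathcal O_n^\nu\subset\mathcal H^{-}(\boldsymbol{\lambda},\mu_2)$. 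Any common point $\boldsymbol{q}$ would then satisfy $\boldsymbol{\lambda}^\top \boldsymbol{q} > \mu_1 > \mu_2 > \boldsymbol{\lambda}^\top \boldsymbol{q}$, a contradiction, so $\mathcal B_m^u(\boldsymbol{\xi}) \cap \mathcal O_n^\nu=\emptyset$.

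I expect the main obstacles to be bookkeeping rather than conceptual. First, Proposition~\ref{prop3} is stated only for one orientation of half-space, so I must remember to flip signs of $\boldsymbol{\lambda}$ and $\mu_1$ when applying it to $\mathcal B_m^u$. Second, I need to be careful that the strict inequalities $\mu_1>\mu_2$ and $\|\boldsymbol{\lambda}\|>0$ appearing in (\ref{eq16}) are genuinely delivered by Theorem~\ref{theorem1}; this is where compactness is essential, since for merely closed convex sets the separating hyperplanes could degenerate to $\mu_1=\mu_2$. Translating the pointwise separation into elementwise matrix inequalities, and handling the case where $\mathcal B_m^u$ and $\mathcal O_n^\nu$ touch only at a single boundary point (which the strict gap rules out), are the only remaining details.
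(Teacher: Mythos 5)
Your proposal is correct and follows essentially the same route as the paper: compactness via Lemma~\ref{lemma2}, strict separation via Theorem~\ref{theorem1}, and specialization to the columns of $V$ and $O$ for the forward direction. You are in fact more complete than the paper, whose proof only argues the left-to-right implication; your reverse direction, applying Proposition~\ref{prop3} with the sign-flipped pair $(-\boldsymbol{\lambda},-\mu_1)$ and deriving a contradiction from a common point, is exactly the missing half and is carried out correctly.
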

\begin{proof}
Based on Theorem~\ref{theorem1}, $\mathcal B_m^u(\boldsymbol{\xi}) \cap \mathcal O_n^\nu=\emptyset$ is identical to stating that they are separated by a gap surrounded by two parallel hyperplanes. It is noticed that the gap is an unbounded convex polygon. Let ${\mathcal H^{+}(\boldsymbol{\lambda}, \mu_1)=\left\{\boldsymbol{q} \in \mathbb{R}^2 \mid \boldsymbol{\lambda}^\top \boldsymbol{q} > \mu_1\right\}}$ be one half-space not including the gap. Let  $\mathcal H^{-}(\boldsymbol{\lambda}, \mu_2)=\left\{\boldsymbol{q} \in \mathbb{R}^2 \mid \boldsymbol{\lambda}^\top \boldsymbol{q} < \mu_2\right\}$ be the other half-space not including the gap. We assume $\mu_1 > \mu_2$. According to Proposition~\ref{prop3}, $\mathcal B_m^u(\boldsymbol{\xi}) \cap \mathcal O_n^\nu=\emptyset$ it follows that vertices of $\mathcal B_m^u$ stay in $\mathcal H^{+}(\boldsymbol{\lambda}, \mu_1)$ and vertices of $\mathcal O_n^\nu$ in $\mathcal H^{-}(\boldsymbol{\lambda}, \mu_2)$, i.e., $\boldsymbol{\lambda}^\top V > \mu_1 \boldsymbol{1}^\top, \boldsymbol{\lambda}^\top O < \mu_2 \boldsymbol{1}^\top$. Additionally, $\left\|\boldsymbol{\lambda}\right\| > 0$ is imposed to ensure $\mu_1 > \mu_2$.
\end{proof}

The third way of reformulating constraints (\ref{eq10}) is a minor variation of the second approach. Instead of enforcing vertices of $\mathcal B_m^u$ and $\mathcal O_n^\nu$ to reside in two sides of a separating polygon, it is sufficient to enforce the separation by a single hyperplane.

\begin{prop}\label{prop5}
Let $V$ and $O$ be defined as stated earlier. Then, 
\begin{equation}\label{eq17}
\begin{aligned}
\mathcal B_m^u(\boldsymbol{\xi}) \cap \mathcal O_n^\nu=\emptyset \iff  \exists \boldsymbol{\lambda} \in \mathbb{R}^2 , \mu \in \mathbb{R}: \ \ \ \ \ \ \ \ \  \ \ \ \   \\ 
  \begin{array}{c}
\boldsymbol{\lambda}^\top V > \mu \boldsymbol{1}^\top, \boldsymbol{\lambda}^\top O < \mu \boldsymbol{1}^\top, \left\|\boldsymbol{\lambda}\right\| > 0.
\end{array}
\end{aligned}
\end{equation}
\end{prop}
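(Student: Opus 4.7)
My plan is to reduce Proposition~\ref{prop5} to Proposition~\ref{prop4}, since the two statements differ only in whether the vertices are separated by a gap between two parallel hyperplanes or by a single hyperplane. Concretely, for the $(\Rightarrow)$ direction, I would start from $\mathcal B_m^u(\boldsymbol{\xi}) \cap \mathcal O_n^\nu=\emptyset$ and apply Proposition~\ref{prop4} to obtain some $\boldsymbol{\lambda}\in\mathbb{R}^2$ with $\|\boldsymbol{\lambda}\|>0$ and scalars $\mu_1>\mu_2$ such that $\boldsymbol{\lambda}^\top V > \mu_1\boldsymbol{1}^\top$ and $\boldsymbol{\lambda}^\top O < \mu_2\boldsymbol{1}^\top$. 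Then I would simply pick any $\mu$ strictly between $\mu_2$ and $\mu_1$ (e.g. $\mu=(\mu_1+\mu_2)/2$); the strict inequalities are preserved and yield exactly $\boldsymbol{\lambda}^\top V > \mu\boldsymbol{1}^\top$ and $\boldsymbol{\lambda}^\top O < \mu\boldsymbol{1}^\top$.

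For the $(\Leftarrow)$ direction, I would use Proposition~\ref{prop3} to promote the vertex-level separation to a set-level separation. Given $\boldsymbol{\lambda}^\top V > \mu\boldsymbol{1}^\top$, every vertex of $\mathcal B_m^u$ lies in the open half-space $\{\boldsymbol{q}:\boldsymbol{\lambda}^\top\boldsymbol{q}>\mu\}$, which is nothing other than $\mathcal H^{-}(-\boldsymbol{\lambda},-\mu)$ in the notation of Proposition~\ref{prop3}; applying that proposition gives $\mathcal B_m^u\subset\{\boldsymbol{q}:\boldsymbol{\lambda}^\top\boldsymbol{q}>\mu\}$. Symmetrically, $\boldsymbol{\lambda}^\top O < \mu\boldsymbol{1}^\top$ together with Proposition~\ref{prop3} yields $\mathcal O_n^\nu\subset\{\boldsymbol{q}:\boldsymbol{\lambda}^\top\boldsymbol{q}<\mu\}$. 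These two open half-spaces are disjoint, so $\mathcal B_m^u(\boldsymbol{\xi})\cap \mathcal O_n^\nu=\emptyset$ follows immediately.

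The main obstacle is essentially bookkeeping rather than any deep argument: I need to justify that Proposition~\ref{prop3}, stated only for the half-space $\mathcal H^{-}$, applies equally to the ``positive'' half-space by flipping the sign of $\boldsymbol{\lambda}$ and $\mu$, and I need to explain the role of the condition $\|\boldsymbol{\lambda}\|>0$, which serves only to exclude the degenerate choice $\boldsymbol{\lambda}=\boldsymbol{0}$ that would make $\boldsymbol{\lambda}^\top V > \mu\boldsymbol{1}^\top$ and $\boldsymbol{\lambda}^\top O < \mu\boldsymbol{1}^\top$ simultaneously impossible. The substance of the argument is already contained in Propositions~\ref{prop3} and~\ref{prop4}; Proposition~\ref{prop5} is a streamlined restatement that saves one scalar variable per obstacle pair.
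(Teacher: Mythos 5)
Your proposal is correct and follows essentially the same route as the paper: the forward direction invokes Proposition~\ref{prop4} and chooses $\mu$ strictly between $\mu_1$ and $\mu_2$, and the backward direction lifts the vertex-level separation to the full sets via Proposition~\ref{prop3}. Your extra care in noting that Proposition~\ref{prop3} must be applied with the sign of $\boldsymbol{\lambda}$ and $\mu$ flipped for the ``positive'' half-space (and, implicitly, to $\mathcal O_n^\nu$ as well as $\mathcal B_m^u$) is a detail the paper glosses over, but it does not change the substance of the argument.
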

\begin{proof}
We first prove the left-to-right implication. According to Proposition~\ref{prop4}, if $\mathcal B_m^u(\boldsymbol{\xi}) \cap \mathcal O_n^\nu=\emptyset$, $\exists \mu_1 > \mu_2$, $\boldsymbol{\lambda}^\top V > \mu_1 \boldsymbol{1}^\top, \boldsymbol{\lambda}^\top O < \mu_2 \boldsymbol{1}^\top$. Let $\mu_1 > \mu > \mu_2 $, it can be seen that both $\boldsymbol{\lambda}^\top V > \mu \boldsymbol{1}^\top$ and $\boldsymbol{\lambda}^\top O < \mu \boldsymbol{1}^\top$ are correct, and additionally $\left\|\boldsymbol{\lambda}\right\| > 0$ guarantees that at least one element of $\boldsymbol{\lambda}$ is nonzero. For the right-to-left implication, consider $\boldsymbol{\lambda}^\top V > \mu \boldsymbol{1}^\top, \boldsymbol{\lambda}^\top O < \mu \boldsymbol{1}^\top, \left\|\boldsymbol{\lambda}\right\| > 0$ holds. Then, based on Proposition~\ref{prop3} constraints $\boldsymbol{\lambda}^\top \boldsymbol{q} > \mu, \boldsymbol{\lambda}^\top \boldsymbol{p}< \mu$ hold $\forall \boldsymbol{q} \in \mathcal B_m^u$, $\forall \boldsymbol{p} \in \mathcal O_n^\nu$, i.e., $B_m^u(\boldsymbol{\xi}) \cap \mathcal O_n^\nu=\emptyset$.
\end{proof}
To summarize, the collision constraints (\ref{eq2e}) are reformulated through (\ref{eq14}), (\ref{eq16}), and (\ref{eq17}) by checking all $m$, $n$ ,$u$, and $v$. These constraints guarantee that the vehicle avoids collisions effectively with obstacles.

\section{Analytical evaluation in problem size}\label{sec5}
In this subsection, a few prominent methods in literature to model collision constraints (\ref{eq10}) are used as a benchmark to compare with the proposed methods in this paper. 

In \cite{ref19, ref21}, the collision constraints for a full vehicle body are derived by using indicator functions or Farkas' lemma. Let $\mathcal B_m^u$ and $\mathcal O_n^\nu$ be represented by (\ref{eq11}) and (\ref{eq12}), respectively. The constraints (\ref{eq10}) are then formulated as
\begin{equation}\label{eq18}
\begin{aligned}
\mathcal B_m^u(\boldsymbol{\xi}) \cap \mathcal O_n^\nu=\emptyset \ \ \ \ \ \ \ \ \ \ \ \ \ \ \ \ \ \ \ \ \ \ \ \ \ \ \ \ \ \ \ \ \ \ \ \ \ \ \ \\ 
\iff \exists \boldsymbol{\lambda},  \boldsymbol{\mu}  \geq \boldsymbol{0}: -\boldsymbol{b}^{\top} \boldsymbol{\lambda}-\boldsymbol{d}^{\top} \boldsymbol{\mu} \geq d_\tx{safe},\\
A^{\top} \boldsymbol{\lambda} + C^{\top} \boldsymbol{\mu}  =\mathbf{0},
\end{aligned}
\end{equation}
where $d_\tx{safe}\in \mathbb{R^+}$ is a safety distance. The method in \cite{ref21} is very similar and includes one additional constraint $\Vert A^{\top} \boldsymbol{\lambda} \Vert_2 \leq 1$. The collision constraints (\ref{eq10}) are formulated as
\begin{equation}\label{eq19}
\begin{aligned}
\mathcal B_m^u(\boldsymbol{\xi}) \cap \mathcal O_n^\nu=\emptyset \ \ \ \ \ \ \ \ \ \ \ \ \ \ \ \ \ \ \ \ \ \ \ \ \ \ \ \ \ \ \ \ \ \ \ \ \ \ \ \\ 
\iff \exists \boldsymbol{\lambda},  \boldsymbol{\mu}  \geq \boldsymbol{0}: -\boldsymbol{b}^{\top} \boldsymbol{\lambda}-\boldsymbol{d}^{\top} \boldsymbol{\mu} \geq d_\tx{safe},\\
\Vert A^{\top} \boldsymbol{\lambda} \Vert_2 \leq 1,
A^{\top} \boldsymbol{\lambda} + C^{\top} \boldsymbol{\mu} =\mathbf{0}.
\end{aligned}
\end{equation}

In \cite{ref10}, the authors first formulate collision avoidance based on the notion of signed distance, and then they reformulate the signed distance via dualization techniques. 
Let
\begin{equation}\label{eq20}
\begin{aligned}
\mathcal B_m^u(\boldsymbol{\xi}) =R(t)B_0+\boldsymbol{T}(t), B_0:=\left\{ \boldsymbol{q} \in \mathbb{R}^2 \mid \ C_0  \boldsymbol{q} \leq \boldsymbol{d}_0 \right\},
\end{aligned}
\end{equation}
where $B_0$ is the initial space occupied by polygon $\mathcal B_m^u$, $C_0 \in \mathbb{R}^{N_m^u \times 2}$ and $\boldsymbol{d}_0 \in \mathbb{R}^{N_m^u}$. $R(t)\in \mathbb{R}^{2\times 2} $ is
a rotation matrix, and $\boldsymbol{T}(t)\in \mathbb{R}^{2}$ is the translation vector. Polygon $\mathcal O_n^\nu$ is represented by (\ref{eq12}). Constraints (\ref{eq10}) are reformulated as
\begin{equation}\label{eq21}
\begin{aligned}
\mathcal B_m^u(\boldsymbol{\xi}) \cap \mathcal O_n^\nu=\emptyset \ \ \ \ \ \ \ \ \ \ \ \ \ \ \ \ \ \ \ \ \ \ \ \ \ \ \ \ \ \ \ \ \ \ \ \ \ \ \ \\ 
\iff \exists \boldsymbol{\lambda},  \boldsymbol{\mu}  \geq \boldsymbol{0}:-\boldsymbol{d}_0^{\top} \boldsymbol{\mu}+(A\boldsymbol{T}(t)-\boldsymbol{b})^{\top} \boldsymbol{\lambda} \geq d_\tx{safe},\\
\Vert A^{\top} \boldsymbol{\lambda} \Vert_2 \leq 1, C_0^{\top} \boldsymbol{\mu} + R(t)^{\top}A^{\top} \boldsymbol{\lambda} =\mathbf{0}.
\end{aligned}
\end{equation}

Table~\ref{tab5} summarizes the number of auxiliary variables needed per obstacle to formulate $\mathcal B_m^u(\boldsymbol{\xi}) \cap \mathcal O_n^\nu=\emptyset$ by different formulations. It can be seen that the proposed formulations \eqref{eq16} need 3 variables to construct the separating hyperplane between $\mathcal B_m$ and $ \mathcal O_n$, and the proposed formulations \eqref{eq17} need 4 variables to construct the separating gap, respectively. The numbers are not depending on the complexity of set $\mathcal B_m $ or $\mathcal O_n$. As a contrast, the proposed formulations \eqref{eq14}, the existing formulations \eqref{eq18}, \eqref{eq19}, and \eqref{eq21} identify additional variables proportional to the complexity of $\mathcal B_m$ and $ \mathcal O_n$ being represented, always needing more variables than the proposed formulations \eqref{eq16} and \eqref{eq17}. Table~\ref{tab5} also states that when the number of edges of $\mathcal B_m$ and $ \mathcal O_n$ increases, more variables are needed to describe formulations \eqref{eq14}, \eqref{eq18}, \eqref{eq19}, and \eqref{eq21}, while the number of needed variables remains unchanged using the proposed formulations \eqref{eq16} and \eqref{eq17}. 

\begin{table}
\centering
\caption{The number of auxiliary variables needed for per obstacle by different methods.}
    \begin{threeparttable}
\begin{tabular}{ccccccc} \toprule
 \eqref{eq14} &  \eqref{eq16} & \eqref{eq17} & \eqref{eq18} & \eqref{eq19} & \eqref{eq21}\\
\midrule
$N_m^u+N_n^\nu \tnote{1}$ & 3 & 4 & $N_m^u+N_n^\nu$ & $N_m^u+N_n^\nu$ & $N_m^u+N_n^\nu$ \\
 \bottomrule
\end{tabular}
\begin{tablenotes}   
        \footnotesize 
        \item[1] The least number of edges to construct a bounded polygon is 3, i.e., $N_{m}^u\geq 3, N_{n}^\nu\geq 3$, indicating that the number using the proposed formulations \eqref{eq16} and \eqref{eq17} is always less than that using the proposed formulations \eqref{eq14}, the existing formulations \eqref{eq18}, \eqref{eq19}, and \eqref{eq21}.
      \end{tablenotes}
    \end{threeparttable}
\label{tab5}
\end{table}

\section{Nonlinear programming formulation}\label{sec6}
The OCP (\ref{eq2}) is generally reformulated as an NLP problem that can be solved using off-the-shelf solvers. The maximum time \( t_{f}\) will be discretized into \( {k}_{f}+1\) parts, i.e., \( t_{f} = {k}_{f} \Delta t\), where $\Delta t$ is the sample interval. The travel time $t$ is replaced with \( t = {k} \Delta t\), where ${k}=0$, $\ldots$, \( {k}_{f}\). Let \(\tilde{f}\), and \(\tilde{\ell}\) denote the discretized  model dynamics \( f\) and stage cost \(\ell\). Thus, the OCP (\ref{eq2}) involving collision avoidance can be formulated as an NLP problem
\begin{subequations}\label{eq22}
\begin{align}
\underset {\boldsymbol{\xi}, \boldsymbol{u},(\cdot)} {\text{min}} \ \ &  \sum_{{k}=0}^{{k}_f} \tilde{\ell}\left(\boldsymbol{\xi}({k}), \boldsymbol{u}({k})\right) \label{eq22a}\\
\text {s.t.}  \ \ & \boldsymbol{\xi}(0)=\boldsymbol{\xi_{\tx{init}}},\ \boldsymbol{\xi}(k_\tx{f})=\boldsymbol{\xi_{\tx{final}}} \label{eq22b}\\
 & \boldsymbol{\xi}(k+1) = \tilde{f}\left(\boldsymbol{\xi}(k),\boldsymbol{u}(k)\right), \label{eq22c}\\
 & \boldsymbol{\xi}(k) \in \mathcal{X},\ \boldsymbol{u}(k) \in \mathcal{U}, \label{eq22d}\\
 & (\ref{eq13}), \forall m, u, \label{eq22e} \\ 
 & \tx{Collision \ constraints}, \forall m, n, u, \nu, 
 \label{eq22f}
\end{align}
\end{subequations}
where collision constraints refer to one of (\ref{eq14}), (\ref{eq16}), (\ref{eq17}), (\ref{eq18}), (\ref{eq19}), and (\ref{eq21}). Symbol \( (\cdot )\) is a shorthand notation for decision variables that need to be added to the vector of existing optimization variables, e.g., $\mu$ in (\ref{eq16}). 

A special case, e.g., formulating a parking problem, is when time needs to be minimized, i.e., when $t_f$ is not known in advance. This can easily be handled, e.g., by introducing a new independent variable $\tau \in [0, 1]$, and expressing the time as $t=t_f \tau$, where $t_f\geq 0$ is a scalar optimization variable that needs to be added to the problem. The state and control trajectories will become a function of $\tau$, with state dynamics 
\[ \frac{\tx{d}\boldsymbol{\xi}(\tau)}{\tx{d}\tau} = t_f f(\boldsymbol{\xi}(\tau), \boldsymbol{u}(\tau)) \]
and a generalized cost function
\[ \underset {\boldsymbol{\xi}, \boldsymbol{u}, t_f,(\cdot)}{\text{min}} \ t_f\left(r + \int_0^{1}  
 \ell\left(\boldsymbol{\xi}(\tau), \boldsymbol{u}(\tau)\right)\tx{d}\tau \right) \]
where $r$ is a penalty factor that trades the parking time with the rest of the cost. A discretization procedure can then similarly be applied as proposed in the previous paragraph. 

\section{Numerical simulations}\label{sec7}
The trajectory optimization problems for autonomous vehicles parking in a limited maneuverable space are considered in this part. Various tests are implemented in vertical parking, parallel parking, and oblique parking scenarios. We validate the effectiveness of the proposed methods in (\ref{eq14}), (\ref{eq16}), (\ref{eq17}), by comparing with formulations (\ref{eq18}), (\ref{eq19}), (\ref{eq21}) in the NLP framework (\ref{eq22}). In the following, the detailed form of the resulting NLP is first presented, and three different initial guesses are proposed. Then, we discuss the solution performance and problem size of each formulation in all scenarios.

\subsection{Dynamic model and cost function}
Since the parking scenarios involve low-speed maneuvers, a kinematic vehicle model \cite{ref5, ref10} is exploited. The system dynamics is modeled as
\begin{equation}\label{eq23}
\begin{array}{l}
    \dot{x}=v \cos (\theta),\\
    \dot{y}=v \sin (\theta),\\
    \dot{\theta}=v\tan(\delta)/L,\\
    \dot{v}=a,\\
    \dot{\delta}=\omega,
\end{array}
\end{equation}
where $L=\SI{2.796}{m}$ is the wheelbase of the vehicle. The position of the middle of the rear axle of the vehicle $\mathcal B$ is identical to the vector $\left(x,y\right)$, and $\theta$ represents the yaw angle with respect to the horizontal axle. The velocity and acceleration of the rear axle are $v$ and $a$. The steering angle and the gradient of the steering angle are denoted by $\delta$ and $\omega$, respectively. The state and control vectors can be summarized as
\begin{equation}\label{eq24}
\begin{array}{c}
\boldsymbol{\xi} = \left[x,y,\theta,v,\delta\right]^{\top},
\boldsymbol{u} = \left[a,\omega\right]^{\top}. \\
\end{array}
\end{equation}
The limits of states $x,y$ in each scenario are shown in Table~\ref{tab1}. In all parking simulations, the feasible values of $\theta,v,\delta$ are limited to $|\theta| \leq \SI{180}{\degree}$, $ |v| \leq \frac{5}{3.6} \si{m/s^2}$, $|\delta| \leq 40 \si{\degree}$. Additionally, the feasible inputs are given by $|a|\leq \SI{1}{m/s^2}$ and $|\omega|\leq \SI{5}{\degree/s^2}$.

We expect the autonomous vehicle to complete the parking maneuvers subjected to minimum traveling time. Also, the control input $a$ and $\omega$ should be small to yield smooth trajectories. Thus, a typical time-energy cost function $J$ is formulated as 
\begin{equation}\label{eq25}
\begin{aligned}
J(\boldsymbol{u},t_f) 
= t_f \left(r + \int_{0}^{1} \left\Vert \boldsymbol{u}\left(\tau\right)\right\Vert_{P}^{2} \tx{d}\tau\right).
\end{aligned}
\end{equation}
The notation \(\left\Vert \cdot\right\Vert_{P}^{2}\) represents squared Euclidean norm weighted by matrix \(P\). For the studied cases we choose $r=1, P={\rm diag}(1, 2)$.

\subsection{Case descriptions}
In the example of Fig.~\ref{fig1}, three different parking scenarios are considered. The driving environment $\mathcal W$ is modeled by a convex polygon, depicted in green. In each scenario, two obstacles $\mathcal O_1$ and $\mathcal O_2$ are modeled by convex polygons, depicted in red solid. The vehicle $\mathcal B$ is modeled as a rectangle of size $\SI{4.628}{m} \times \SI{2.097}{m}$, depicted in purple dotted line. The environment is modeled as $\mathcal W :=\left\{\boldsymbol{q} \in \mathbb{R}^2 \mid A_0 \boldsymbol{q} \leq \boldsymbol{b_0}\right\}$, and each obstacle as $\mathcal O_j :=\left\{\boldsymbol{q} \in \mathbb{R}^2 \mid A_j \boldsymbol{q} \leq \boldsymbol{b_j}\right\}, j=1,2$, 
with coefficients for the various parking scenarios listed in Table~\ref{tab1}.
\begin{table}
\centering
\caption{The descriptions of the driving environment and obstacles.}
{\begin{tabular}{cccc} \toprule
Parameters & Vertical &  Parallel   & Oblique  \\
\midrule
x(m) & (-2, 15) & (-2, 22) & (-4, 20) \\
y(m) & (-8, 8) & (-6, 8) & (-8, 4) \\
$[\boldsymbol{A_0},\boldsymbol{b_0}]$&
$\begin{bmatrix}0 , 1 , 8 \\ 0 , -1 , 8\\ -1 , 0 , 2 \\ 1 , 0 , 15\end{bmatrix}$ & $\begin{bmatrix}0 , 1 , 8 \\ 0 , -1 , 6\\ -1 , 0 , 2 \\ 1 , 0 , 22\end{bmatrix}$   & $\begin{bmatrix}0 , 1 , 4 \\ 0 , -1 , 8\\ -1 , 0 , 4 \\ 1 , 0 , 20\end{bmatrix}$     \\
$[\boldsymbol{A_1},\boldsymbol{b_1}]$ & 
$\begin{bmatrix} 0, 1, -2\\  1, 0, 5\\ 0,-1, 8\\ -1, 0, 0\end{bmatrix}$ & 
$\begin{bmatrix} 0, 1, -3\\ 1, 0, 5\\ 0, -1, 6\\ -1,0, 0\end{bmatrix}$ & 
$\begin{bmatrix} 0, 1, -2\\ -1, 0, 7\\ 0,-1, 8\\ 1, 0, 2\end{bmatrix}$  \\
$[\boldsymbol{A_2},\boldsymbol{b_2}]$ & 
$\begin{bmatrix} 0, 1, -2\\-1, 0, -7.5\\ 0, -1, 8\\ 1, 0, 0\end{bmatrix}$ & 
$\begin{bmatrix} 0, 1, -3\\ 1, -1,-12 \\ 0, -1, 6\\ -1, 0, 20\end{bmatrix}$ & 
$\begin{bmatrix} 0, 1, -2\\-1, 1, -11\\ 0,-1, 8\\ 1,0, 18\end{bmatrix}$   \\
 \bottomrule
\end{tabular}}
\label{tab1}
\end{table}

Simulations were conducted in MATLAB R2020a and executed on a laptop with AMD R7-5800H CPU at 3.20 GHz and 16GB RAM. The explicit 4-th order Runge-Kutta method is used as a numerical integration method. Since we use a multiple shooting approach, the problem is formulated as a collection of $k_\tx{f}$ phases. Let $k_\tx{f}=20$. The problems are then implemented using the CasADi \cite{ref33}. The NLP (\ref{eq22}) is solved using the interior point solver IPOPT \cite{ref34}.  
\begin{remark}
In all simulation cases, we model each obstacle with four hyperplanes. Although the number of hyperplanes to model the critical sides of obstacles can be reduced to improve the efficiency, e.g., eliminating hyperplanes by culling procedures \cite{ref39}, we abstain from using it for didactic reasons since the goal is comparing the solution quality of different formulations of constraints (\ref{eq10}) on the same premise. 
\end{remark}

\begin{figure*}[!t]
\centering
\subfloat[Vertical parking]{\includegraphics[width=0.33\textwidth]{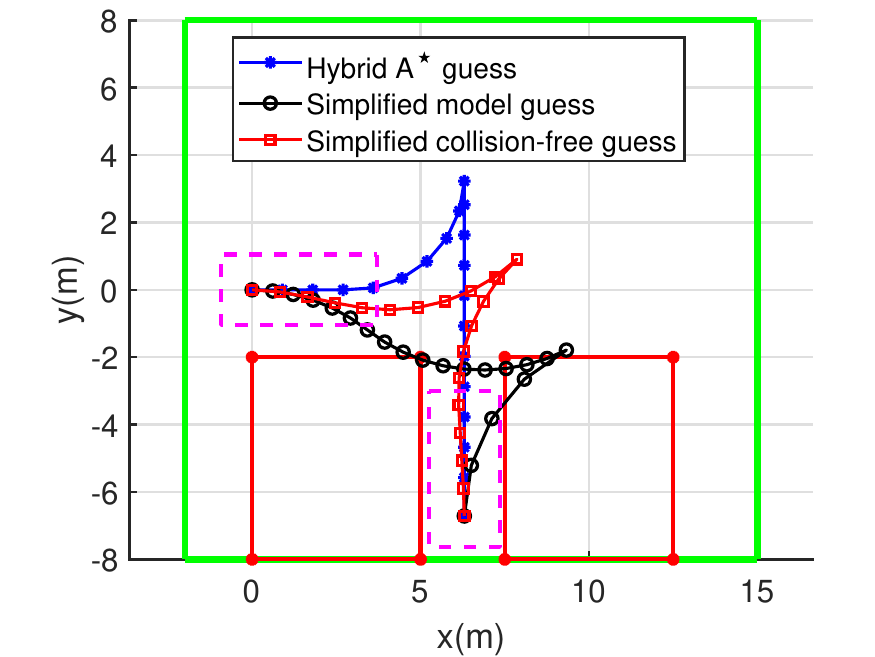}%
\label{fig1a}}
\hfil
\subfloat[Parallel parking]{\includegraphics[width=0.33\textwidth]{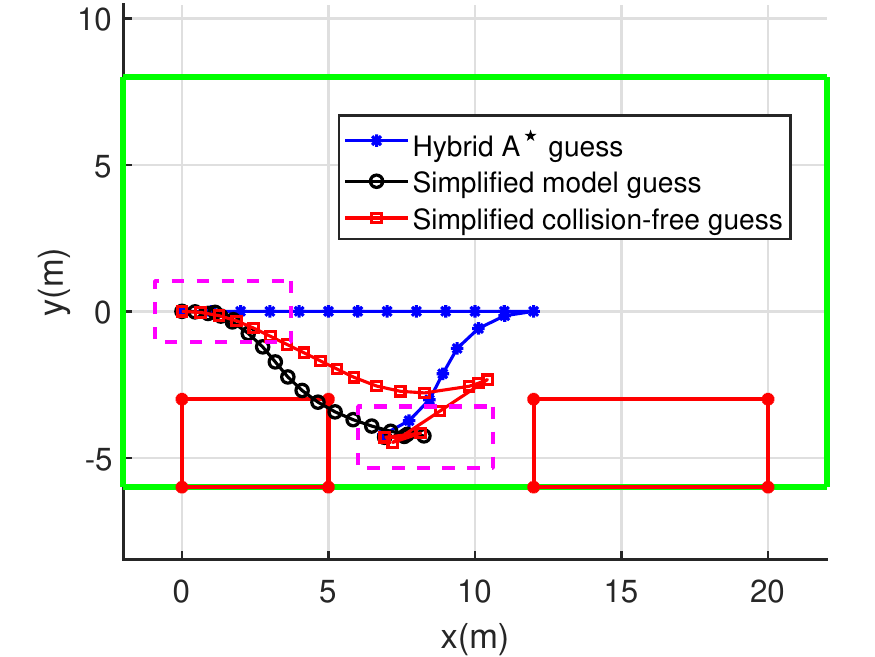}%
\label{fig1b}}
\hfil
\subfloat[Oblique parking]{\includegraphics[width=0.33\textwidth]{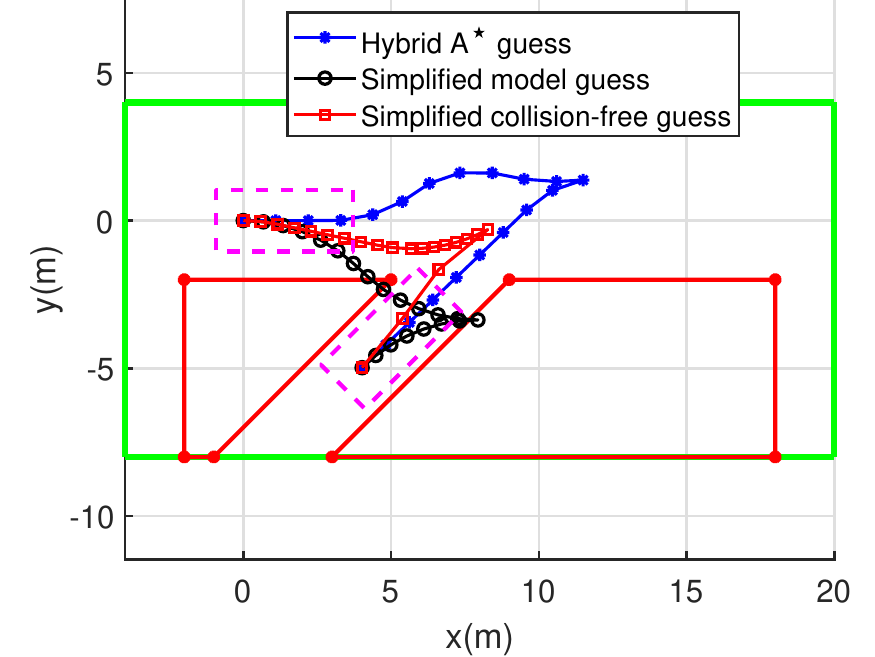}%
\label{fig1c}}
\hfil
\caption{Variable initial guesses in each parking scenario. The execution of parallel parking presents greater complexity for autonomous vehicles owing to the potential occurrence of multiple reverse behaviors.  The starting state $\boldsymbol{\xi_{\tx{init}}}$ of these three scenarios  is  $\begin{bmatrix}\SI{0}{m}, \SI{0}{m}, \SI{0}{\degree}, \SI{0}{m/s}, \SI{0}{\degree}\end{bmatrix}$. The ending state $\boldsymbol{\xi_{\tx{final}}}$ is $\begin{bmatrix}\SI{6.3}{m}, \SI{-6.7}{m},\SI{90}{\degree},\SI{0}{m/s},\SI{0}{\degree}\end{bmatrix}$,  $\begin{bmatrix}\SI{6.9}{m}, \SI{-4.3}{m}, \SI{0}{\degree},\SI{0}{m/s},\SI{0}{\degree} \end{bmatrix}$, $\begin{bmatrix}\SI{4}{m}, \SI{-5}{m},\SI{45}{\degree},\SI{0}{m/s},\SI{0}{\degree} \end{bmatrix}$, respectively. } 
\label{fig1}
\end{figure*}

\subsection{Initial Guesses}
Nonconvex problems, such as (\ref{eq22}), may have multiple local solutions. Indeed, problem \eqref{eq22} may have infinitely many local solutions, as the vehicle may take infinitely many paths to reach the goal.  
An initial guess in the neighborhood of a local solution is more likely to cause the NLP solver to return that local solution. Moreover, a bad initial guess may even prevent the solver to find a feasible solution in a reasonable time 
\cite{ref19, ref38, ref41 }. For this reason,  
we construct three initial guesses to verify the computational effort when using the different methods of modelling collision constraints. 

\subsubsection{Simplified model guess} \label{sec:simplified-guess}
Here, we use a simplified vehicle model in the parking problem, and use its solution as an initial guess to problem \eqref{eq22}. We model the vehicle dynamics with the simplified state vector $[x,y,\theta]^{\top}$ and control vector $[v,\delta]^{\top}$ in (\ref{eq24}). Except for the penalty on travel time, we expect the autonomous vehicle to avoid the long displacement between two adjacent samples, so the cost function is defined as $t_f \left(\int_{0}^{1} pv(\tau)^2 \tx{d}\tau + r\right)$, where $p$ is a weighting factor. Additionally, we implement this planning program in an environment $\mathcal W$ without obstacles, leading to the NLP
{\allowdisplaybreaks
\begin{subequations}\label{eq26}
\begin{align}
\underset {\boldsymbol{\xi}, \boldsymbol{u},t_f} {\text{min}} \ & t_f \left(\int_{0}^{1} pv(\tau)^2 \tx{d}\tau + r\right)
\label{eq26a}\\
\text { s.t.}   \ & \tx{The \ simplified \ model \ dynamics}, \label{eq26b}\\
 & \tx{ Initial \ and \ terminal \ constraints}, \label{eq26c}\\
 &  \tx{State \ and \ control \ limits}, \label{eq26d}\\
 & (\ref{eq13}). \label{eq26e}
\end{align}
\end{subequations}}

Notice that the NLP \eqref{eq26} itself needs an initial guess. For this, we simply linearly interpolate the states from their initial to target values, while the initial guess for the control inputs is set to zero. 

\subsubsection{Simplified collision-free guess}
To obtain initial guesses satisfying collision-free conditions, based on the NLP (\ref{eq26}), we add collision constraints constructed by formulation (\ref{eq18}).
The resulting NLP is 
{\allowdisplaybreaks
\begin{subequations}\label{eq27}
\begin{align}
\underset {\boldsymbol{\xi}, \boldsymbol{u},t_f,(\cdot)} {\text{min}} \ & t_f \left(\int_{0}^{1} pv(\tau)^2 \tx{d}\tau + r\right) 
\label{eq27a}\\
\text { s.t.}  \ & \tx{The \ simplified \ model \ dynamics}, \label{eq27b}\\
 & \tx{Initial \ and \ terminal \ constraints}, \label{eq27c}\\
 &  \tx{State \ and \ control \ limits}, \label{eq27d}\\
 & (\ref{eq13}), (\ref{eq18}) \label{eq27e}
\end{align}
\end{subequations}}
which is initialized exactly as that in Section~\ref{sec:simplified-guess}. 

\subsubsection{Hybrid $\tx{A}^{\star}$ guess}
The hybrid $\tx{A}^{\star}$ approach, which is widely applied in parking scenarios, is a version of the $\tx{A}^{\star}$ algorithm combined with the Reeds-Shepp curve generation. This approach uses a simplified kinematic model with state vector $[x,y,\theta]^{\top}$ while neglecting control vector $[v,\delta]^{\top}$. The control variables are replaced with a discrete input that decides if the vehicle moves forward or backward. The interested readers can get more information in \cite{ref35}. By exploiting the hybrid $\tx{A}^{\star}$ approach, the shortest collision-free path and the orientations of vehicle along the path from an initial state to a final state can be obtained.

Fig.~\ref{fig1} illustrates the initial guesses based on different methods. Generally, initial guesses that (almost) satisfy many of the constraints reduce the work involved in finding a feasible solution. It is notable that nonholonomic constraints \cite{ref36, ref37} of ground vehicles are considered in all three initial guesses. However, the initial guesses by solving (\ref{eq27}) and exploiting hybrid $\tx{A}^{\star}$ method are collision-free while the initial guess derived from (\ref{eq26}) is not. Based on these initial guesses, the solution results and computational demand of solving the NLP
(\ref{eq22}) are shown in Table~\ref{tab2}.

\subsection{Comparisons of obtained solutions }

The first observation from Table~\ref{tab2} is that based on the different initial guesses and various collision formulations, there exist a few local solutions in each scenario, especially in parallel parking. For the oblique scenario is simple and it can be seen that the objective value $J$ is exactly the same in this scenario. The parallel scenario is complex and based on a simplified model guess, the $J$, TS, and $t_f$ are widely large (depicted in red values), indicating that the solver can not converge to a good local optimum and the corresponding trajectories are less smooth and time-consuming, while based on a simplified collision-free guess or hybrid $\tx{A}^{\star}$ guess, the solver finds better local solutions. The best solutions in vertical and parallel parking refer to the objective values of 1.012 and 1.221, respectively. These solutions can be obtained by exploiting the proposed methods in (\ref{eq14}), (\ref{eq16}), and (\ref{eq17}). The corresponding trajectories are shown in Fig.~\ref{fig2a}--\ref{fig2c}. It can be seen that the parking space is narrow, and the autonomous vehicle keeps close to obstacles but the vertices of the vehicle do not enter into obstacle edges and vertices of obstacles do not cross any edge of the vehicle. These planned trajectories are collision-free and smooth. 

\subsection{Comparisons of problem size}

From the CN and VN columns in Table~\ref{tab2}, it can be seen that the NLP (\ref{eq22}) using the proposed (\ref{eq16}) or (\ref{eq17}) shows a large reduction of the number of optimization variables and inequality constraints, while the proposed method in (\ref{eq14}) needs the same number of auxiliary variables as in the existing methods (\ref{eq18}), (\ref{eq19}), and (\ref{eq21}). The reason why the values for CN and VN are lower when using (\ref{eq16}) and (\ref{eq17}) is because the separating hyperplanes method needs 3 variables ($\boldsymbol{\lambda} \in \mathbb{R}^2 , \mu \in \mathbb{R}$ in \eqref{eq17}), and 4 variables ($\boldsymbol{\lambda} \in \mathbb{R}^2 , \mu_1, \mu_2 \in \mathbb{R}$ in \eqref{eq16}) per obstacle, and variables depend on the number of obstacles. On the contrary, the existing methods (\ref{eq18}), (\ref{eq19}), (\ref{eq21}) and the proposed method (\ref{eq14}) identify variables associated with edges of a vehicle and an obstacle, depending on the number of the total edges of the vehicle and all obstacles. It states that when the number of edges of each obstacle increase, more variables are needed to formulate (\ref{eq18}), (\ref{eq19}), (\ref{eq21}), and (\ref{eq14}), while the number of variables by the proposed (\ref{eq16}) or (\ref{eq17}) remains unchanged. Typically, obstacles are various nonconvex polygons composed of many convex polygons, so this advantage of the proposed methods (\ref{eq16}) and (\ref{eq17}) are highly beneficial for efficient and exact collision avoidance in real-world scenarios.

\begin{figure*}[!t]
\centering
\subfloat[J=1.012]{\includegraphics[width=0.33\textwidth]{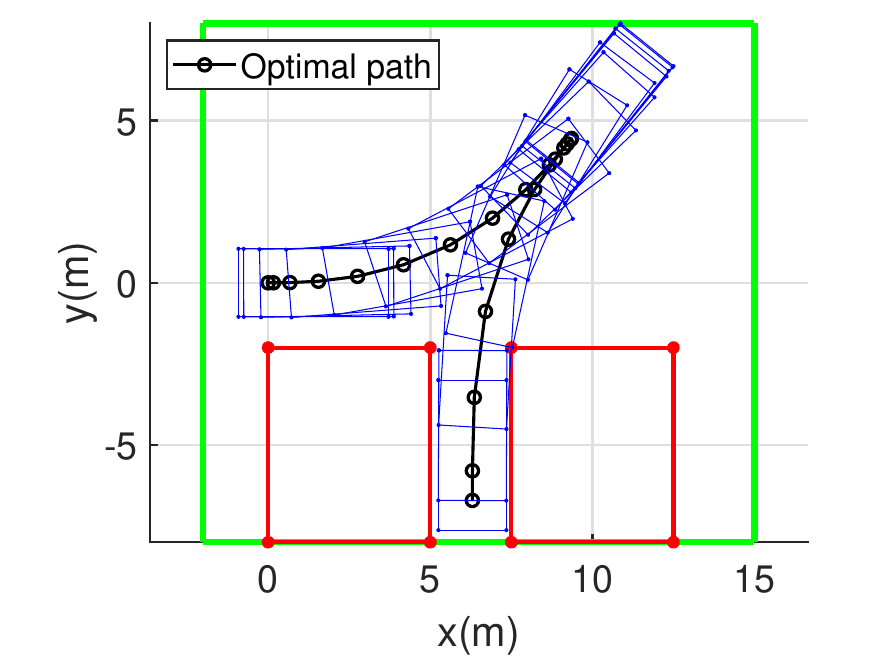}%
\label{fig2a}}
\hfil
\subfloat[J=1.221]{\includegraphics[width=0.33\textwidth]{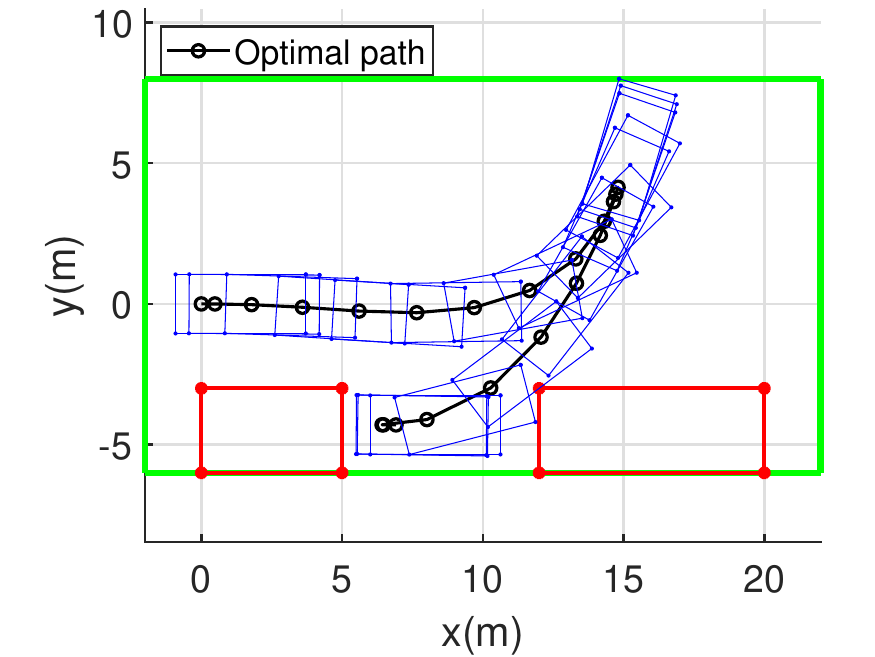}%
\label{fig2b}}
\hfil
\subfloat[J=0.856]{\includegraphics[width=0.33\textwidth]{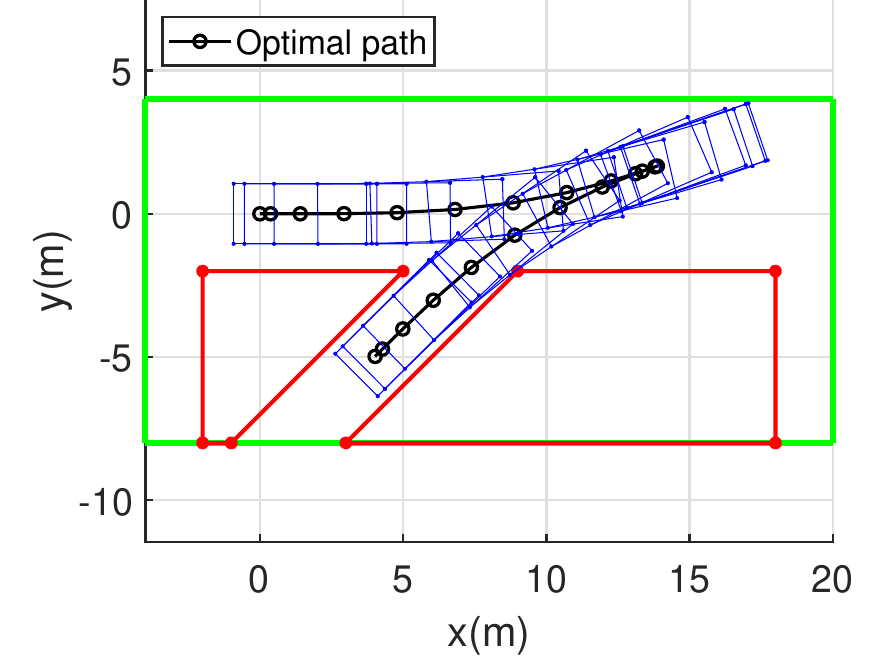}%
\label{fig2c}}
\hfil
\caption{The optimal solutions in each parking scenario. (a) Vertical scenario. (b) Parallel scenario. (c) Oblique scenario. These solutions depend on the underlying NLP implementations and the method for modelling collision constraints. The optimal trajectories are depicted in black solid lines. The vehicle is denoted as a rectangle in blue. } 
\label{fig2}
\end{figure*}

\begin{table*}[htbp]
  \centering
  \caption{The obtained optimal results based on different collision formulations and initial guesses.}
  \begin{threeparttable}
    \begin{tabular}{cc|cc|llll|llll|cc}
    \toprule
    \multirow{2}[2]{*}{Methods} & \multirow{2}[2]{*}{Guess} & \multirow{2}[2]{*}{CN \tnote{1}}  & \multirow{2}[2]{*}{VN} & \multicolumn{4}{c|}{Vertical} & \multicolumn{4}{c|}{Parallel} & \multicolumn{2}{c}{Oblique \tnote{4}} \\
          &       &       &       & \multicolumn{1}{c}{$J$}  & \multicolumn{1}{c}{ctime(s)} &  \multicolumn{1}{c}{TS}   & $t_f$(s) & \multicolumn{1}{c}{$J$} & \multicolumn{1}{c}{ctime(s)}  & \multicolumn{1}{c}{TS}    & $t_f$(s) &  \multicolumn{1}{c}{ctime(s)} & \multicolumn{1}{c}{$J$} \\
    \midrule
    \multirow{3}[2]{*}{(\ref{eq14})} & simplified & \multirow{3}[2]{*}{3446} & \multirow{3}[2]{*}{1426} & 1.907 & 3.515  & 0.05  & 171.3 & \textcolor[rgb]{ 1,  0,  0}{1.704} \tnote{3} & \textcolor[rgb]{ 1,  0,  0}{13.016 } & \textcolor[rgb]{ 1,  0,  0}{0.064} & \textcolor[rgb]{ 1,  0,  0}{141.49} & 2.785  & \multirow{18}[12]{*}{0.856} \\
          & collision-free  &       &       & \textbf{1.012} & 1.470  & \textbf{0.036} & \textbf{80.3} & {\textbf{1.221}} & 3.409  & {\textbf{0.042}} & {\textbf{93.94}} & 1.869  &  \\
          & hybrid $\tx{A}^{\star}$ &       &       & 1.19  & 1.610  & 0.044 & 97.2  &   {\textbf{1.221}}    & 1.688  &  {\textbf{0.042}}  &  {\textbf{93.94}}  & 1.586  &  \\
\cmidrule{1-13}    \multirow{3}[2]{*}{(\ref{eq16})} & simplified & \multirow{3}[2]{*}{1286} & \multirow{3}[2]{*}{306} & 1.19  & 0.733  & 0.044 & 97.2  & \textcolor[rgb]{ 1,  0,  0}{1.683} & \textcolor[rgb]{ 1,  0,  0}{3.223 } & \textcolor[rgb]{ 1,  0,  0}{0.063} & \textcolor[rgb]{ 1,  0,  0}{138.67} & 1.131  &  \\
          & collision-free  &       &       & \textbf{1.012} & 0.725  & \textbf{0.036} & \textbf{80.3} & {\textbf{1.221}} & 0.834  & {\textbf{0.042}} & {\textbf{93.94}} & 0.827  &  \\
          & hybrid $\tx{A}^{\star}$ &       &       & 1.19  & 0.596  & 0.044 & 97.2  &   {\textbf{1.221}}    & 0.712  &   {\textbf{0.042}}    &  {\textbf{93.94}}     & 1.001  &  \\
\cmidrule{1-13}    \multirow{3}[2]{*}{(\ref{eq17})} & simplified & \multirow{3}[2]{*}{\textbf{1166}\tnote{2}}  & \multirow{3}[2]{*}{\textbf{266}} & {1.19} & 0.722  & {0.044} & {97.2} & \textcolor[rgb]{ 1,  0,  0}{1.647} & \textcolor[rgb]{ 1,  0,  0}{2.561 } & \textcolor[rgb]{ 1,  0,  0}{0.055} & \textcolor[rgb]{ 1,  0,  0}{136.26} & 0.981  &  \\
          & collision-free  &       &       &    {1.19}   & \textbf{0.510 } &    {0.044}   &   {97.2}    & {\textbf{1.221}} & \textbf{0.703 } & {\textbf{0.042}} & {\textbf{93.94}} & \textbf{0.708 } &  \\
          & hybrid $\tx{A}^{\star}$ &       &       & \textbf{1.012} & 0.538  & \textbf{0.036} & \textbf{80.3} &   {\textbf{1.221}}    & 0.920  &  {\textbf{0.042}}     &   {\textbf{93.94}}    & 0.964  &  \\
\cmidrule{1-13}    \multirow{3}[2]{*}{(\ref{eq18})} & simplified & \multirow{3}[2]{*}{3606} & \multirow{3}[2]{*}{1426} & {1.19} & 2.057  & {0.044} & {97.2} & \textcolor[rgb]{ 1,  0,  0}{1.686} & \textcolor[rgb]{ 1,  0,  0}{7.428} & \textcolor[rgb]{ 1,  0,  0}{0.058} & \textcolor[rgb]{ 1,  0,  0}{138.47} & 3.061  &  \\
          & collision-free  &       &       &    {1.19}   & 1.971  &   {0.044}    &   {97.2}    & \textbf{1.221} & 4.022  & \textbf{0.042} & \textbf{93.94} & 3.127  &  \\
          & hybrid $\tx{A}^{\star}$ &       &       & 1.907 & 3.472  & 0.05  & 171.3 & 1.344 & 2.883  & 0.047 & 103.81 & 2.669  &  \\
\cmidrule{1-13}    \multirow{3}[2]{*}{(\ref{eq19})} & simplified & \multirow{3}[2]{*}{3766} & \multirow{3}[2]{*}{1426} & \multirow{3}[2]{*}{1.19} & 2.047  & \multirow{3}[2]{*}{0.044} & \multirow{3}[2]{*}{97.2} & \textcolor[rgb]{ 1,  0,  0}{1.699} & \textcolor[rgb]{ 1,  0,  0}{8.216 } & \textcolor[rgb]{ 1,  0,  0}{0.059} & \textcolor[rgb]{ 1,  0,  0}{139.58} & 3.174  &  \\
          & collision-free  &       &       &       & 2.176  &       &       & {\textbf{1.221}} & 2.572  & {\textbf{0.042}} & {\textbf{93.94}} & 2.546  &  \\
          & hybrid $\tx{A}^{\star}$ &       &       &       & 1.486  &       &       &   {\textbf{1.221}}    & 2.539  &  {\textbf{0.042}} &{\textbf{93.94}}   & 3.544  &  \\
\cmidrule{1-13}    \multirow{3}[2]{*}{(\ref{eq21})} & simplified & \multirow{3}[2]{*}{3766} & \multirow{3}[2]{*}{1426} & \multirow{3}[2]{*}{1.19} & 2.485  & \multirow{3}[2]{*}{0.044} & \multirow{3}[2]{*}{97.2} & \textcolor[rgb]{ 1,  0,  0}{1.684} & \textcolor[rgb]{ 1,  0,  0}{10.137 } & \textcolor[rgb]{ 1,  0,  0}{0.063} & \textcolor[rgb]{ 1,  0,  0}{138.72} & 3.053  &  \\
          & collision-free  &       &       &       & 1.894  &       &       & {\textbf{1.221}} & 2.501  & {\textbf{0.042}} & {\textbf{93.94}} & 2.762  &  \\
          & hybrid $\tx{A}^{\star}$ &       &       &       & 1.182  &       &       &    {\textbf{1.221}}   & 3.374  & {\textbf{0.042}} &{\textbf{93.94}}    & 2.047  &  \\
    \bottomrule
    \end{tabular}%
    \begin{tablenotes}   
        \footnotesize    
        \item[1] CN and VN are the total numbers of NLP constraints and variables to be optimized by using different methods. The computation time of the solver is ctime. The parking duration is $t_f$. The value $ \tx{TS}=\sum_{{k}=0}^{{k}_f-1} \left(a(k)^2+\omega(k)^2\right)$ is calculated to evaluate the trajectory smoothness.
        \item[2] The bold values denote the least of each column.     
        \item[3] The red values represent some obtained local solutions based on a simplified model guess. 
        \item[4] In this scenario, all locally optimal solutions are exactly the same based on different methods and various initial guesses. The TS and $t_f$ are 0.032 and \SI{70.18}{s}, respectively.
      \end{tablenotes}
    \end{threeparttable}
  \label{tab2}%
\end{table*}%

\subsection{Comparisons of computational demand}
Another primary observation in Table~\ref{tab2} is that the proposed formulations (\ref{eq16}), (\ref{eq17}) converge to local optimums very fast. In each scenario, when based on a same initial guess, it takes the least solving time for the proposed (\ref{eq16}), (\ref{eq17}) to solve an NLP. Even in vertical and oblique scenarios, the proposed (\ref{eq16}) and (\ref{eq17}) based on a simplified model guess are able to converge faster than other formulations based on good initial guesses. In all scenarios, when inputting a simplified collision-free guess or hybrid $\tx{A}^{\star}$ guess, it only takes hundreds of ms to solve using the proposed methods (\ref{eq16}), (\ref{eq17}), while other methods need thousands of ms, showing a considerable reduction of computational demand. What's more, it can be seen that the proposed method (\ref{eq14}) exhibits the same computational demand with existing formulations (\ref{eq18}), (\ref{eq19}), and (\ref{eq21}). The lower computational demand for the proposed methods (\ref{eq16}) and (\ref{eq17}) is because they employ fewer variables needed to be optimized and fewer constraints compared with others. Formulating the proposed method (\ref{eq14}) needs the same number of auxiliary variables as the existing methods (\ref{eq18}), (\ref{eq19}), and (\ref{eq21}), so their speed of solving the NLP is about the same.

To summarize, the proposed methods (\ref{eq16}) and (\ref{eq17}) have strong convergence and are more efficient, indicating that we can obtain a better local solution with less solving time in each parking scenario, and the proposed method (\ref{eq14}) performs equally well with state-of-the art methods.

\begin{remark}
The idea behind the proposed formulation (\ref{eq14}) is that a vehicle does not collide with obstacles when their vertices are kept outside each other. However, one possible issue of this kind of formulation is when the autonomous vehicle encounters narrow polygonal obstacles, the vehicle might opt to traverse through the obstacle in one sample, so a situation may occur that although the resulting trajectory is not collision-free, it still meets the collision avoidance constraints in each sample ($k=0,1,\cdots,k_f$). This issue can be circumvented by setting a shorter sampling interval.
\end{remark}

\begin{remark}
In this paper, the implementation in Matlab extensively uses the routines in CasADi and its harness for the solver IPOPT. It is quite possible to achieve improvements in computation time by exploiting other interior point solvers and different programming languages, so these results should be seen as a proof-of-concept rather than as a case of benchmarking a mature implementation of the algorithms. Besides, due to the computational performance's dependence on the underlying NLP solver and its implementation, an active-set sequential quadratic programming (SQP) solver might exhibit significant variations compared to an interior point solver \cite{ref21}. In our investigation, we attempted to solve the NLP (\ref{eq22}) using the active-set SQP algorithm implemented in Knitro \cite{ref39}. However, the computational cost of employing Knitro to solve the NLP (\ref{eq22}) proved to be considerably high, e.g., it takes 377s for CPU to converge to J=0.856 by using (\ref{eq18}) based on a hybrid $\tx{A}^{\star}$ guess of oblique parking. In some cases, the solver could not provide an optimal solution even with favorable initial guesses. Maybe other commercial active-set solvers might assist in evaluating the computational demands, but their proprietary nature prevents us from exploring these options. Nevertheless, the manipulation of auxiliary variables in the proposed (\ref{eq16}) and (\ref{eq17}) remains relatively minimal. Thus, we anticipate that the computational demands associated with active-set solvers should also be lower when compared to existing methods (\ref{eq18}), (\ref{eq19}), and (\ref{eq21}), as well as the proposed (\ref{eq14}). However, the computational comparisons among (\ref{eq14}), (\ref{eq18}), (\ref{eq19}), and (\ref{eq21}) are left to be investigated. 
\end{remark}

\section{Conclusion}\label{sec8}
This paper proposes three exact and explicit formulations of collision avoidance constraints. Through evaluations in the context of autonomous parking scenarios, we compare these proposed formulations with state of the art. The results highlight all formulations' ability to accurately model collision constraints, making them suitable for assisting autonomous vehicles in determining optimal trajectories even within confined surroundings. Notably, the formulations utilizing the hyperplane separation theorem significantly reduce an NLP problem size and show notable computational efficiency. We conclude that for robust and efficient optimization-based trajectory planning in autonomous parking scenarios, the combination of appropriate warm starting with formulations (\ref{eq16}) or (\ref{eq17}) is highly effective, e.g., a hybrid $\tx{A}^{\star}$ guess and formulation (\ref{eq17}). Moreover, the low computational demand when using (\ref{eq16}) and (\ref{eq17}) is valuable for real-time avoidance of static and dynamic obstacles in complex traffic scenarios. 

Future work may focus on developing better methods for warm starting the NLP. Additionally, the investigation of online trajectory planning maneuvers, leveraging the proposed efficient methods, is worth further exploring.

\end{document}